%% file: main.tex
\documentclass[preprint,12pt]{elsarticle}

\usepackage{amssymb}
\usepackage{amsmath,amsthm}
\usepackage{graphicx}%
\usepackage{subcaption}
\usepackage{multirow}%
\usepackage{amsmath,amssymb,amsfonts}%
\usepackage{amsthm}%
\usepackage{mathrsfs}%
\usepackage[title]{appendix}%
\usepackage{xcolor}%
\usepackage{textcomp}%
\usepackage{manyfoot}%
\usepackage{booktabs}%
\usepackage{algorithm}%
\usepackage{algorithmicx}%
\usepackage{algpseudocode}%
\usepackage{listings}%
\usepackage{hyperref}%
\usepackage{tikz}
\usetikzlibrary{positioning} 
\usepackage{pgfplots}
\usepackage[colorinlistoftodos]{todonotes} 
\usepackage{bm}
\usepackage{microtype}

\newtheorem{lemma}{Lemma}

\newcommand{\sbigotimes}{%
  \mathop{\mathchoice{\textstyle\bigotimes}{\bigotimes}{\bigotimes}{\bigotimes}}%
}

\usepackage{lineno}

\journal{Journal of Computational Physics}

\begin{document}

\begin{frontmatter}



\title{Scalable Gaussian process modeling of \\ parametrized spatio-temporal fields}

\author[1]{Srinath Dama}\ead{srinath.dama@mail.utoronto.ca}

\author[1]{Prasanth B. Nair\corref{cor1}}\ead{pbn@utias.utoronto.ca}


\cortext[cor1]{Corresponding author}

\affiliation[1]{organization={University of Toronto Institute for Aerospace Studies}, 
            addressline={4925 Dufferin Street}, 
            city={Toronto},
            postcode={M3H 5T6}, 
            state={Ontario},
            country={Canada}}

\begin{abstract}
We introduce a scalable Gaussian process (GP) framework with deep product kernels for data-driven learning of parametrized spatio-temporal fields  over fixed or parameter-dependent domains. The proposed framework learns a continuous  representation,  enabling predictions at arbitrary spatio-temporal coordinates, independent of the training data resolution.  
We leverage Kronecker matrix algebra to formulate a computationally efficient training procedure with complexity that scales nearly linearly with the total number of spatio-temporal grid points.
A key feature of our approach is the efficient computation of the posterior variance at essentially the same computational cost as the posterior mean (exactly for Cartesian grids and via rigorous bounds for unstructured grids), thereby enabling scalable uncertainty quantification. 
Numerical studies on a range of benchmark problems demonstrate that the proposed method achieves accuracy competitive with  operator learning methods such as Fourier neural operators and deep operator networks. On the one-dimensional unsteady Burgers' equation, our method surpasses the accuracy of projection-based reduced-order models. These results establish the proposed framework as an effective tool for data-driven surrogate modeling, particularly when uncertainty estimates are required for downstream tasks.
\end{abstract}







\end{frontmatter}



\pagebreak 

\input{chapters/Intro}

\input{chapters/Methods}

\input{chapters/Results}

\input{chapters/Conclusions}

\input{chapters/Appendix}

\end{document}

%% file: chapters/Intro.tex
\section{Introduction}

Many-query problems are ubiquitous in computational science and engineering, arising in contexts such as uncertainty quantification, Bayesian inverse problems, and simulation-based optimization~\cite{,Peherstorfer2018,forrester2008,  keane2005computational}. When each query requires evaluation of a high-fidelity simulation, the associated computational cost can become prohibitive. This has motivated the development of surrogate models that use samples of the simulation output to construct approximations of the input-output map that can be evaluated at significantly lower computational cost.

When constructing surrogate models of parametrized partial differential equations (PDEs), a key challenge is the high dimensionality of the solution space. 
A widely adopted strategy for addressing this challenge is the   two-step reduced-order modeling (ROM) approach. In the first step, a dimensionality reduction method is applied to an ensemble of high-dimensional solution snapshots to construct a low-dimensional latent representation. The effectiveness of this step is fundamentally governed by the Kolmogorov $n$-width of the solution manifold, which quantifies the best achievable approximation error over all $n$-dimensional subspaces~\cite{Peherstorfer2022}. For problems with rapidly decaying 
$n$-width, linear manifold representations such as principal component analysis (PCA) have proven highly effective~\cite{lucia2004reduced,  couplet2005calibrated,  veroy2005certified, bui-thanh2008, audouze2009reduced, quarteroni2011certified, hesthaven2016certified, hesthaven2018non}. However, many problems of practical interest exhibit slowly decaying 
$n$-width, which has motivated nonlinear dimensionality reduction approaches including Isomap~\cite{xing2015reduced}, kernel PCA~\cite{xing2016manifold}, diffusion maps~\cite{xing2016manifold}, and autoencoder networks~\cite{xu2020multi}.

The second step involves constructing a mapping from input parameters to the latent representation. Intrusive methods~\cite{lucia2004reduced, veroy2005certified, quarteroni2011certified, hesthaven2016certified} achieve this by projecting the governing equations onto the low-dimensional subspace, ensuring  consistency with the underlying physics but requiring direct access to the full-order model. Non-intrusive methods instead employ function approximation techniques such as radial basis function interpolation~\cite{audouze2013nonintrusive, berzicnvs2020standardized}, Gaussian processes~\cite{higdon2008,xing2015reduced, xing2016manifold, guo2018reduced}, or neural networks~\cite{hesthaven2018non,renganathan2021enhanced, renganathan2020machine, xu2020multi} to learn the reduced-order map directly from data.

A fundamental limitation of the two-step reduced-order modeling approach is that the dimension-reduction map is typically learned in an unsupervised manner, without accounting for the dependence of the snapshots on the input parameters. For parametrized problems, this can yield latent coordinates that are not injective with respect to the full state; consequently, distinct full states (often corresponding to different parameters or unresolved modes) map to the same latent point. In that case, the induced latent time derivatives are generally multi-valued, and no autonomous reduced-order ODE on the latent space can reproduce the projected dynamics exactly~\cite{course2023amortized}.

In recent years, operator learning methods have emerged as a popular approach for learning solution maps of PDEs. Methods such as Fourier neural operators (FNO)~\cite{li2020fourier, kovachki2023neural}, DeepONets~\cite{lu2021learning, lu2022comprehensive,goswami2022deep}, and Gaussian process  operator models~\cite{mora2025operator} 
aim to learn mappings between infinite-dimensional function spaces and have demonstrated success on a range of benchmark problems. Physics-informed variants that incorporate governing equations into the training objective have also been developed in the literature~\cite{li2024physics, wang2021learning}. 
While operator learning architectures provide flexible nonparametric function approximators, they are typically trained as deterministic predictors and require additional mechanisms to quantify predictive uncertainty. Moreover, many applications require well-calibrated uncertainty estimates together with continuous predictions over space and time. These considerations motivate the development of probabilistic approaches that can  handle large spatio-temporal datasets arising in high-fidelity simulations.

Gaussian process (GP) regression offers an elegant  probabilistic framework for surrogate modeling, with uncertainty estimates that are critical for downstream decision-making tasks~\cite{kennedy2000predicting,williams2006gaussian,higdon2008,forrester2008}. However, standard GP regression requires $\mathcal{O}(n^3)$ operations for $n$ training points, rendering direct application to high-dimensional spatio-temporal fields intractable. 
A variety of scalable GP methods have been developed to address this bottleneck, including sparse inducing-point approximations~\cite{titsias2009variational, hensman2013gaussian} and structured kernel interpolation~\cite{wilson2015kernel}; see \cite{liu2020} for a review. For certain classes of problems
(e.g., when the training inputs lie on a Cartesian grid), product-structured kernels enable the use of Kronecker matrix algebra to efficiently perform exact GP inference for large datasets; see, for example, \cite{saatcci2012scalable, Bilionis2013, wilson2014fast,evans2018exploiting}. Despite this progress, the computational challenges associated with scalable GP modeling of parametrized spatio-temporal fields remain largely unresolved. The key challenge is that the high-resolution unstructured spatial grids encountered in many real-world  applications preclude direct use of Kronecker matrix algebra.

In this paper, we develop a scalable GP framework for learning parametrized spatial and spatio-temporal fields over fixed or  parametrized spatial domains. Our approach uses deep product-structured kernels and Kronecker matrix algebra to achieve scalability. 
We introduce our methodology by considering the setting where the spatial grid has a Cartesian product structure. We subsequently  generalize our framework to handle general unstructured spatial grids and parametrized spatial domains, broadening its applicability to a wide class of problems. 
The generalization to unstructured grids is enabled by a gappy-grid formulation in which observations on an incomplete rectilinear grid are augmented with pseudovalues chosen so that the resulting linear system retains full Kronecker structure; we prove (Lemma~\ref{lem:fill_gaps_equivalence}) that the solution on the observed points is exact, i.e., identical to that obtained from the original unstructured system. A key challenge in this setting is the computation of the posterior variance, since the covariance matrix restricted to the observed points lacks Kronecker structure. We address this by establishing rigorous upper and lower bounds on the posterior variance (Lemma~\ref{lem:gappy_posterior_variance_bounds}) that are computable via Kronecker algebra at a cost comparable to that of the posterior mean.

The key contributions of the present work are: (1)~a deep product kernel that captures nonlinear correlations across parameters, spatial coordinates, and time; (2)~a training algorithm with complexity that scales nearly linearly with the number of spatio-temporal grid points; (3)~computational strategies for dealing with Cartesian and unstructured spatial grids as well as  parametrized domains; and (4)~efficient computation of the posterior variance at the same cost as the posterior mean, providing exact uncertainty quantification for Cartesian spatial grids and rigorous theoretical bounds for unstructured grids.

We validate the proposed framework on a diverse set of fluid and solid mechanics benchmarks involving rectilinear and unstructured grids, including problems with parametrized geometries. Across these cases, the proposed method matches or exceeds the accuracy of established operator-learning baselines such as FNO and DeepONet, while providing posterior uncertainty estimates at comparable computational cost. We additionally demonstrate favorable performance relative to projection-based reduced-order models on the unsteady Burgers' benchmark.

\section{Preliminaries} \label{section:Problem_statement}

This section formalizes the learning problem considered in this work: the construction of a data-driven surrogate for parametrized spatio-temporal fields. We then briefly review Gaussian process regression and highlight the computational bottlenecks that arise when standard GP methodology is applied at the spatio-temporal resolutions typical of scientific simulations.

\subsection{Problem Statement}

Let $\Omega \subset \mathbb{R}^d$ ($d\in\{1,2,3\}$) denote a spatial domain, $\mathcal{T}=[0,T]$ a time interval, and $\mathcal{P}\subset\mathbb{R}^D$ a parameter domain. For each $(\boldsymbol{\mu},t)\in\mathcal{P}\times\mathcal{T}$, let
$u(\cdot\,,t\,; \boldsymbol{\mu})\in \mathcal{V}$ 
denote a scalar field over $\Omega$, where $\mathcal{V}$ is a Banach space of real-valued functions on $\Omega$ (e.g., $\mathcal{V}=L^2(\Omega)$ or $H^1(\Omega)$). Equivalently, $u$ may be viewed pointwise as a map
$
u:\mathcal{P}\times\Omega\times\mathcal{T}\to\mathbb{R}$, 
$(\boldsymbol{\mu},\mathbf{x},t)\mapsto u(\mathbf{x},t; \boldsymbol{\mu})$.

Our goal is to learn an approximation of the parametrized spatio-temporal field as a function of the parameter vector $\boldsymbol{\mu}$, spatial coordinates $\mathbf{x}$, and time $t$ using observations of $u$. We denote the training dataset as
$$
\mathcal{D}:= \left \{ \left ( \boldsymbol{\mu}^{(i)}, \, t^{(j)}, \, u(\cdot \,,t^{(j)}\,; \boldsymbol{\mu}^{(i)})\right  ) \right \}_{i=1,2,\ldots,N;\,j=1,2,\ldots,N_t},
$$
where $\boldsymbol{\mu}^{(i)} \in \mathcal{P}$, $t^{(j)} \in [0, T]$,  and  $  u( \cdot \,, t^{(j)}\,; \boldsymbol{\mu}^{(i)}) \in \mathcal{V}$. 
Given $\mathcal{D}$, our goal is to construct a surrogate model of $u$ that can be evaluated at arbitrary $(\boldsymbol{\mu},\mathbf{x},t)\in\mathcal{P}\times\Omega\times\mathcal{T}$.
The learning problem we consider commonly arises in the context of surrogate modeling of parametrized, time-dependent PDEs of the form
\begin{equation}
\frac{\partial u(\mathbf{x},t; \boldsymbol{\mu})}{\partial t}+\mathcal{N}\bigl(u(\mathbf{x},t; \boldsymbol{\mu})\bigr)=f(\mathbf{x},t; \boldsymbol{\mu}),
\qquad (\boldsymbol{\mu},\mathbf{x},t)\in\mathcal{P}\times\Omega\times\mathcal{T},
\label{unsteady-pde}
\end{equation}
where $\mathcal{N}$ is a parametrized linear or nonlinear operator and $f$ denotes a source term. In practice, each field $u(\cdot\,,t^{(j)}\,; \boldsymbol{\mu}^{(i)})$ is available only through a spatial discretization, yielding a snapshot vector $\mathbf{y}^{(i,j)}\in\mathbb{R}^M$ evaluated over the spatial grid points $\{\mathbf{x}^{(k)}\}_{k=1}^M\subset\Omega$. In the present work, we consider the problem of approximating spatio-temporal fields over fixed as well as parametrized spatial domains.  

\subsection{Background on Gaussian process regression} \label{section:GPR}

In what follows we work with the pointwise data obtained by enumerating all parameter, time, and spatial grid combinations, yielding $NMN_t$ scalar observations. We introduce the combined input $
\mathbf{z} = [\boldsymbol{\mu}, \mathbf{x}, t]^T \in \mathbb{R}^{D+d+1}$, 
and interpret $u(\mathbf{z})$ as the pointwise evaluation $u(\mathbf{x},t; \boldsymbol{\mu})$ of the field at input $\mathbf{z}$. GP regression~\cite{williams2006gaussian} provides a probabilistic surrogate model by placing a prior distribution on $u(\cdot)$ of the form
\[
u(\mathbf{z}) \sim \mathcal{GP} \bigl(m(\mathbf{z}), k(\mathbf{z},\mathbf{z}')\bigr),
\]
where $m$ and $k$ denote a mean function and a positive definite covariance (kernel) function, respectively, that are parametrized by hyperparameters $\boldsymbol{\theta}$. 

We assume that the pointwise observations are corrupted by Gaussian noise, i.e.,
$
y^{(i)} = u(\mathbf{z}^{(i)}) + \epsilon^{(i)}$, where $\epsilon^{(i)} \sim \mathcal{N}(0,\sigma_n^2)$. We collect the inputs and outputs into:
\[
\mathbf{Z} = \left[\mathbf{z}^{(1)}, \mathbf{z}^{(2)}, \ldots, \mathbf{z}^{(NMN_t)}\right]^T \in \mathbb{R}^{NMN_t \times (D+d+1)}, 
\]
and 
\[
\mathbf{y} = \left[y^{(1)}, y^{(2)}, \ldots, y^{(NMN_t)}\right]^T \in \mathbb{R}^{NMN_t}.
\]
Let $\mathbf{m} = \bigl[m(\mathbf{z}^{(1)}),\ldots,m(\mathbf{z}^{(NMN_t)})\bigr]^T$ and let $\mathbf{K_Z}\in\mathbb{R}^{NMN_t\times NMN_t}$ be the kernel matrix with entries $(\mathbf{K_Z})_{ij}=k(\mathbf{z}^{(i)},\mathbf{z}^{(j)})$. Under the Gaussian noise model, the covariance of $\mathbf{y}$ is given by 
$
\mathbf{K_y} = \mathbf{K_Z} + \sigma_n^2 \mathbf{I}$.
 The posterior predictive mean and variance corresponding to a test input
$\mathbf{z}_*$ are 
given by~\cite{williams2006gaussian}
\begin{equation}
\widehat{u}(\mathbf{z}_*) = m(\mathbf{z}_*) + \mathbf{k}(\mathbf{z}_*)^T \mathbf{K}_{\mathbf{y}}^{-1}\,(\mathbf{y}-\mathbf{m}) 
\label{pred_mean_orig}
\end{equation}
and 
\begin{equation}
\widehat{\sigma}^2(\mathbf{z}_*) = k(\mathbf{z}_*,\mathbf{z}_*) - \mathbf{k}(\mathbf{z}_*)^T \mathbf{K}_{\mathbf{y}}^{-1}\,\mathbf{k}(\mathbf{z}_*), 
\label{pred_dist}
\end{equation}
respectively, 
where $\mathbf{k}(\mathbf{z}_*) = [k(\mathbf{z}_*,\mathbf{z}^{(1)}),\ldots,k(\mathbf{z}_*,\mathbf{z}^{(NMN_t)})]^T$.

The kernel hyperparameters $\boldsymbol{\theta}$ are commonly learned by minimizing the negative log marginal likelihood (NLML),
\begin{eqnarray}
\textrm{NLML}(\boldsymbol{\theta})
= -\log p(\mathbf{y}\,|\,\mathbf{Z},\boldsymbol{\theta})
& = & \frac{1}{2}(\mathbf{y}-\mathbf{m})^T 
\mathbf{K}_{\mathbf{y}}^{-1} (\mathbf{y}-\mathbf{m}) 
+  \frac{1}{2}\log|\mathbf{K}_{\mathbf{y}}| \nonumber \\
& + & \frac{NMN_t}{2}\log(2\pi).
\label{NLML}
\end{eqnarray}
When using gradient-based optimization, we require the derivatives of NLML with respect to the hyperparameters, which take the form
\begin{equation}
\frac{\partial \textrm{NLML}(\boldsymbol{\theta})}{\partial \theta_j}
= 
-\frac{1}{2}(\mathbf{y}-\mathbf{m})^T \mathbf{K}_{\mathbf{y}}^{-1}\frac{\partial \mathbf{K}_{\mathbf{y}}}{\partial \theta_j}\mathbf{K}_{\mathbf{y}}^{-1}(\mathbf{y}-\mathbf{m}) + \frac{1}{2}\mathrm{tr}\!\left(\mathbf{K}_{\mathbf{y}}^{-1}\frac{\partial \mathbf{K}_{\mathbf{y}}}{\partial \theta_j}\right).
\label{NLMLderivate}
\end{equation}

The terms dominating the computational cost associated with calculating the log marginal likelihood and its derivatives include the linear solve $\mathbf{K}^{-1}_{\mathbf{y}} \mathbf{y}$, the log determinant $\log|\mathbf{K_y}|$, and the trace term $\mathrm{tr} ( \mathbf{K}^{-1}_{\mathbf{y}} \frac{\partial \mathbf{K_y}}{\partial \theta_j} )$. For dense covariance matrices, direct Cholesky factorization requires $\mathcal{O}((NMN_t)^3)$ operations and $\mathcal{O}((NMN_t)^2)$ memory. Therefore, the computational complexity associated with evaluating  \eqref{NLML} and \eqref{NLMLderivate} is $\mathcal{O}((NMN_t)^3)$. 
Given a factorization of $\mathbf{K_y}$ and a single test point $\mathbf{z}_*$, evaluating the posterior mean, $\widehat{u}(\mathbf{z}_*)$, requires $\mathcal{O}(NMN_t)$ operations, whereas evaluating the posterior variance, $\widehat{\sigma}^2(\mathbf{z}_*)$, involves a triangular solve and an inner product, requiring $\mathcal{O}((NMN_t)^2)$ operations.

In real-world applications, where training datasets are generated from high-fidelity simulations, the number of spatial grid points $M$ is significantly greater than the number of parameter samples $N$ and time steps $N_t$. For example, if $M \sim \mathcal{O}(10^6)$, $N \sim \mathcal{O}(10^3),$ and $N_t \sim \mathcal{O}(10^3)$, the computational complexity is $\mathcal{O}(10^{36})$. This exorbitant computational cost renders standard GP regression impractical at the spatio-temporal resolutions encountered in many applications.

A number of approaches have been proposed in the literature to address the computational challenges associated with GP regression for large datasets.
One way to scale GP regression for large datasets is to use sparse GP methods~\cite{titsias2009variational, hensman2013gaussian, matthews2016sparse}, which approximate the full GP with a smaller set of inducing points. Another approach is to leverage the structure of the covariance matrix when the input data has particular structure. For instance, in  one-dimensional problems with evenly spaced inputs, the covariance matrix inherits a Toeplitz structure~\cite{cunningham2008fast}. Saat{\c{c}}i et al.~\cite{saatcci2012scalable} showed that when the training data is given on a Cartesian/rectilinear grid, choosing a product kernel ensures the covariance matrix inherits a Kronecker product structure, thereby enabling significant reduction in computational complexity for large datasets. For problems where the data is given on a general unstructured grid, Wilson et al.~\cite{wilson2015kernel} introduced a scalable GP framework called KISS-GP, which is based on structured kernel interpolation for approximating the GP kernel function. This approach uses iterative methods and heavily relies on fast matrix-vector products for scalability.    

Despite significant progress in scalable GP modeling, the challenges associated with learning parametrized spatio-temporal fields have received comparatively little attention in the literature. 
In the next section, we introduce deep product-structured kernels that greatly enhance the ability of GPs to learn complex parameter-dependent spatio-temporal fields. This class of kernels together with the numerical methods based on Kronecker matrix algebra introduced in subsequent sections enable dramatic reductions in the computational complexity of training and inference when learning GP models of parametrized spatio-temporal fields.

%% file: chapters/Methods.tex
\section{Deep product kernels} \label{section:Scalable-GPR}

GPs often employ stationary covariance functions such as the squared exponential (SE) or Mat\'ern kernels~\cite{williams2006gaussian}. Stationarity means that the covariance depends on inputs only through their relative displacement, i.e., $k(\mathbf{z},\mathbf{z}') = \kappa(\mathbf{z}-\mathbf{z}')$ (and, in the isotropic case, through $\|\mathbf{z}-\mathbf{z}'\|$). While stationary kernels are effective in many settings, they impose a homogeneous correlation structure across the input space and can be insufficient for strongly nonstationary responses, for example when smoothness, anisotropy, or characteristic lengthscales vary with the inputs.

Deep kernel learning addresses this limitation by composing a stationary base kernel with a learned nonlinear feature map~\cite{wilson2016deep, calandra2016manifold}. Specifically, given a feature map $G$ (typically implemented by a neural network with parameters $\boldsymbol{\theta}_{\mathrm{nn}}$), a deep kernel is defined as
\begin{equation}
    \tilde{k}(\mathbf{z}, \mathbf{z}') = k\!\left(G(\mathbf{z}),\, G(\mathbf{z}')\right),
    \label{deepkernel}
\end{equation}
where $k$ is a stationary base kernel with hyperparameters $\boldsymbol{\theta}_{\mathrm{gp}}$. The resulting covariance function $\tilde{k}$ is generally nonstationary with respect to the original inputs $\mathbf{z}$ while remaining positive definite by construction. We denote the full set of deep-kernel parameters by $\boldsymbol{\theta}= \{\boldsymbol{\theta}_{\mathrm{nn}}, \boldsymbol{\theta}_{\mathrm{gp}}\}$.

In the present work, we propose a \emph{deep product kernel} (DPK) that factorizes across the parameter, spatial, and temporal components of the input  $\mathbf{z}=(\boldsymbol{\mu},\mathbf{x},t)$ as follows:
\begin{equation}
    \begin{split}
        k_{\scriptstyle{DPK}}(\mathbf{z}^{(i)}, \mathbf{z}^{(j)})
        &= \overbrace{k_{\boldsymbol{\mu}} \!\left(\tilde{\boldsymbol{\mu}}^{(i)}, \tilde{\boldsymbol{\mu}}^{(j)}\right)}^{\textrm{parameter kernel}}
        \;\times\;
        \overbrace{k_{\mathbf{x}} \!\left(\tilde{\mathbf{x}}^{(i)}, \tilde{\mathbf{x}}^{(j)}\right)}^{\textrm{spatial kernel}}
        \;\times\;
        \overbrace{k_{t}\!\left(\tilde{t}^{(i)}, \tilde{t}^{(j)}\right)}^{\textrm{temporal kernel}}
        \\
        &=  k_{\boldsymbol{\mu}} \left(G_{\boldsymbol{\mu}}(\boldsymbol{\mu}^{(i)}), G_{\boldsymbol{\mu}}(\boldsymbol{\mu}^{(j)}) \right)  \times k_{\mathbf{x}} \left(G_{\mathbf{x}}(\mathbf{x}^{(i)}), G_{\mathbf{x}}(\mathbf{x}^{(j)}) \right) \\
            & \qquad \qquad \qquad \qquad \times k_t \left(G_t(t^{(i)}), G_t(t^{(j)}) \right), 
        \label{product_kernel}
    \end{split}
\end{equation}
where $\tilde{\boldsymbol{\mu}}=G_{\boldsymbol{\mu}}(\boldsymbol{\mu})$, $\tilde{\mathbf{x}}=G_{\mathbf{x}}(\mathbf{x})$, and $\tilde{t}=G_t(t)$ are component-wise nonlinear transformations given by neural networks, and $k_{\boldsymbol{\mu}}, k_{\mathbf{x}}, k_t$ are stationary base kernels on the corresponding latent spaces. Replacing each $G$ with the identity map reduces \eqref{product_kernel} to a standard stationary product kernel. Parameters for the deep product kernel are denoted as
$\boldsymbol{\theta}= \{(\boldsymbol{\theta}_{\mathrm{nn}}^{\boldsymbol{\mu}}, \boldsymbol{\theta}_{\mathrm{gp}}^{\boldsymbol{\mu}}),\,
(\boldsymbol{\theta}_{\mathrm{nn}}^{\mathbf{x}}, \boldsymbol{\theta}_{\mathrm{gp}}^{\mathbf{x}}),\,
(\boldsymbol{\theta}_{\mathrm{nn}}^{t}, \boldsymbol{\theta}_{\mathrm{gp}}^{t}) \}$. 

The product structure in \eqref{product_kernel} imposes separability of the covariance across $(\boldsymbol{\mu},\mathbf{x},t)$. This kernel choice induces Kronecker structure in the covariance matrix when the training data lies on a Cartesian product grid, which is essential for computational scalability. The factorization can therefore be viewed as an inductive bias that enables scalable inference while still allowing rich, input-dependent representations through the component feature maps. In particular, although the covariance is separable, the resulting GP prior over functions is not restricted to strictly separable functions. To illustrate this point, consider the Cartesian product input space $\mathcal{Z}=\mathcal{M}\times\mathcal{X}\times\mathcal{T}$ with $\mathbf{z}=(\boldsymbol{\mu},\mathbf{x},t)$ and $\mathbf{z}'=(\boldsymbol{\mu}',\mathbf{x}',t')$. The DPK in \eqref{product_kernel} can be written as
\[
k_{\mathrm{DPK}}(\mathbf{z},\mathbf{z}')
=
k_{\boldsymbol{\mu}} \left(G_{\boldsymbol{\mu}}(\boldsymbol{\mu}),G_{\boldsymbol{\mu}}(\boldsymbol{\mu}')\right)\,
k_{\mathbf{x}} \left(G_{\mathbf{x}}(\mathbf{x}),G_{\mathbf{x}}(\mathbf{x}')\right)\,
k_t \left(G_t(t),G_t(t')\right),
\]
where $G_{\boldsymbol{\mu}},G_{\mathbf{x}},G_t$ are component-wise feature maps (neural networks) and $k_{\boldsymbol{\mu}},k_{\mathbf{x}},k_t$ are stationary base kernels. To make the connection with a standard deep kernel explicit, consider the common case in which each factor is a squared exponential (SE) kernel, i.e.,
$k_{\boldsymbol{\mu}}(u,u')=\exp (-\|L_{\boldsymbol{\mu}}(u-u')\|_2^2 )$,
$k_{\mathbf{x}}(v,v')=\exp (-\|L_{\mathbf{x}}(v-v')\|_2^2 )$, and 
$
k_t(w,w')=\exp (-\|L_t(w-w')\|_2^2)$, where $L_{\boldsymbol{\mu}}, L_{\mathbf{x}}, L_t$ are matrices encoding the length-scale hyperparameters of the component kernels.

Multiplying the exponentials and collecting terms, the DPK can be expressed as 
$k_{\mathrm{DPK}}(\mathbf{z},\mathbf{z}')
=\exp (- \bigl\|\boldsymbol{\Phi}(\mathbf{z})-\boldsymbol{\Phi}(\mathbf{z}')\bigr\|_2^2 )$, 
where the joint latent map is the block concatenation
$\boldsymbol{\Phi}(\mathbf{z})
= [L_{\boldsymbol{\mu}}G_{\boldsymbol{\mu}}(\boldsymbol{\mu}),~
L_{\mathbf{x}}G_{\mathbf{x}}(\mathbf{x}),~
L_t G_t(t) ]^T$. 
That is, in the widely used SE setting, $k_{\mathrm{DPK}}$ is exactly an SE deep kernel applied to the concatenated feature space maps $\boldsymbol{\Phi}(\cdot)$. From this perspective, the product structure is not an intrinsic expressivity limitation relative to deep kernels; it is a structural constraint on the feature space mapping, introduced to induce the Kronecker algebra needed for scalability. 

With sufficiently flexible feature maps and standard universal base kernels on compact domains (including SE and many Mat\'ern choices), product kernels retain universal approximation capacity on the product space~\cite{micchelli2006universal, steinwart2001influence}. Fully non-separable deep kernels can be more data-efficient in regimes dominated by strong parameter-space-time interactions, but they are typically computationally prohibitive at the spatial resolutions considered here. In Section~\ref{sec_results} we empirically assess this accuracy--scalability trade-off.

For training datasets arising from numerical solution of parametrized PDEs, the settings for the parameters $\boldsymbol{\mu}$ are typically selected via a design of computer experiments approach~\cite{santner2003design}, and spatial locations $\mathbf{x}$ correspond to a $d$-dimensional spatial grid, where $d \in \{1,2,3\}$. The full input set can be expressed as a Cartesian product
$$
\{\mathbf{z}^{(i)}\}_{i=1}^{NMN_t}
=
\{\boldsymbol{\mu}^{(j)}\}_{j=1}^{N}
\times
\{\mathbf{x}^{(k)}\}_{k=1}^{M}
\times
\{t^{(\ell)}\}_{\ell=1}^{N_t}.
$$
When the training inputs are ordered consistently with this product structure, the resulting covariance matrix inherits a Kronecker product structure~\cite{saatcci2012scalable}:
\begin{equation}
    \mathbf{K}_{\mathbf{Z}}
    \;=\;
    \mathbf{K}_{\boldsymbol{\mu}}
    \otimes
    \mathbf{K}_{\mathbf{X}}
    \otimes
    \mathbf{K}_{\mathbf{T}},
    \label{kron_product-1}
\end{equation}
where
$\mathbf{X} = [\mathbf{x}^{(1)}, \mathbf{x}^{(2)}, \ldots, \mathbf{x}^{(M)} ]^T \in \mathbb{R}^{M \times d}$,
$\mathbf{T} = [t^{(1)}, t^{(2)}, \ldots, t^{(N_t)}]^T \in \mathbb{R}^{N_t}$, 
$\mathbf{K}_{\boldsymbol{\mu}} =
\{ k_{\boldsymbol{\mu}} (\tilde{\boldsymbol{\mu}}^{(i)}, \tilde{\boldsymbol{\mu}}^{(j)} ) \}_{i,j=1}^{N}
\in \mathbb{R}^{N\times N}$, 
$\mathbf{K}_{\mathbf{X}} =
\{ k_{\mathbf{x}} (\tilde{\mathbf{x}}^{(i)}, \tilde{\mathbf{x}}^{(j)} ) \}_{i,j=1}^{M}
\in \mathbb{R}^{M\times M}$, and 
$\mathbf{K}_{\mathbf{T}} =
\{ k_t (\tilde{t}^{(i)}, \tilde{t}^{(j)} ) \}_{i,j=1}^{N_t}
\in \mathbb{R}^{N_t\times N_t}$ 
denote the parameter, spatial, and temporal covariance matrices, respectively.

Despite the Kronecker factorization \eqref{kron_product-1}, the spatial covariance matrix $\mathbf{K}_{\mathbf{X}}$ can remain too large to store and factorize when $M \sim \mathcal{O}(10^6)$, limiting the application of exact GP training. The following two sections address this challenge. First, we consider problems where the spatial points are arranged on a complete Cartesian product grid. Subsequently, we address the general case of  unstructured spatial grids. In both settings, we develop scalable approaches for exact GP training at the million-point spatial scale.

\section{Gaussian process modeling of parametrized spatio-temporal fields on rectilinear spatial grids}
\label{section:Struct-GPR}

We now present the core computational ingredients of the proposed approach in the setting where the spatial locations lie on a rectilinear (Cartesian product) grid. In practical applications, the spatial covariance matrix is significantly larger than the parameter and temporal covariance matrices. This motivates factorization of the spatial kernel across spatial dimensions to induce additional Kronecker structure, enabling scalable GP training and inference for large-scale datasets. 

\subsection{Rectilinear spatial grids and Kronecker-structured covariance}

We consider a $d$-dimensional spatial coordinate, $\mathbf{x}\in\mathbb{R}^d$, with the training points arranged on a rectilinear grid, i.e., 
$\{\mathbf{x}^{(i)}\}_{i=1}^{M}= \mathbf{x}_1\times\cdots\times\mathbf{x}_d$,
where $\mathbf{x}_\ell=\{x_\ell^{(j)}\}_{j=1}^{M_\ell}$ denotes the set of distinct coordinates along the $\ell$th spatial dimension, and the total number of spatial points is
$M=\prod_{\ell=1}^{d} M_\ell$. 
In this setting, we further factorize the spatial kernel appearing in the deep product kernel \eqref{product_kernel} across spatial coordinates as follows: 
\begin{equation}
\begin{split}
k_{\mathbf{x}}(\mathbf{x}^{(i)},\mathbf{x}^{(j)})
&=
\prod_{\ell=1}^{d} k_{x_\ell}(\tilde{x}_\ell^{(i)},\tilde{x}_\ell^{(j)}),
\qquad
\tilde{x}_\ell = G_{x_\ell}(x_\ell),
\end{split}
\label{spatial_product_kernel}
\end{equation}
where $G_{x_\ell}: \mathbb{R} \to \mathbb{R}^{\tilde{d}_\ell}$ denotes a nonlinear feature map (parametrized by a neural network) that lifts the scalar coordinate 
$x_\ell$ to a higher-dimensional latent space of dimension 
$\tilde{d}_\ell \geq 1$. Consequently, $k_{x_\ell}: \mathbb{R}^{\tilde{d}_\ell} \times \mathbb{R}^{\tilde{d}_\ell} \to \mathbb{R}$ acts as a stationary base kernel on this high-dimensional feature space, allowing the model to capture complex nonstationary correlations along each spatial dimension.

Since $\mathbf{X} = [\mathbf{x}^{(1)}, \mathbf{x}^{(2)}, \ldots, \mathbf{x}^{(M)}]^T$ is a Cartesian product grid and the kernel is separable across spatial coordinates, the spatial covariance matrix admits a Kronecker factorization in terms of much smaller matrices: 
$
\mathbf{K}_{\mathbf{X}}=\mathbf{K}_{x_1}\otimes \mathbf{K}_{x_2} \otimes
\cdots\otimes \mathbf{K}_{x_d} = \sbigotimes_{\ell=1}^d \mathbf{K}_{x_\ell}$, where
$\mathbf{K}_{x_\ell}\in\mathbb{R}^{M_\ell\times M_\ell}$ with 
$(\mathbf{K}_{x_\ell})_{ij}=k_{x_\ell}(\tilde{x}_\ell^{(i)},\tilde{x}_\ell^{(j)})$. 
Substituting this into the global Kronecker structure \eqref{kron_product-1} yields the full spatial covariance matrix
\begin{equation}
\mathbf{K}_{\mathbf{Z}}
= \mathbf{K}_{\boldsymbol{\mu}} \otimes 
\left ( 
\bigotimes_{\ell=1}^{d} \mathbf{K}_{x_\ell}
\right ) 
\otimes
\mathbf{K}_{\mathbf{T}}
.
\label{kron_product33}
\end{equation}
The parameters for the deep product kernel in this rectilinear-grid setting are denoted by $
\boldsymbol{\theta}
= \{
(\boldsymbol{\theta}^{\boldsymbol{\mu}}_{\mathrm{nn}},\boldsymbol{\theta}^{\boldsymbol{\mu}}_{\mathrm{gp}}),
(\boldsymbol{\theta}^{x_1}_{\mathrm{nn}},\boldsymbol{\theta}^{x_1}_{\mathrm{gp}}),\ldots,
(\boldsymbol{\theta}^{x_d}_{\mathrm{nn}},\boldsymbol{\theta}^{x_d}_{\mathrm{gp}}),
(\boldsymbol{\theta}^{t}_{\mathrm{nn}},\boldsymbol{\theta}^{t}_{\mathrm{gp}})
\}$, and are learned by minimizing the NLML in \eqref{NLML}.

\subsection{Scalable computation of the marginal likelihood}
\label{subsec:rect_nlml}

We now consider efficient computation of the NLML \eqref{NLML} in the rectilinear-grid setting. The dominant costs in \eqref{NLML} arise from computing the quadratic form $\mathbf{y}^T\mathbf{K}_{\mathbf{y}}^{-1}\mathbf{y}$ and computing $\log|\mathbf{K}_{\mathbf{y}}|$, where $\mathbf{K}_{\mathbf{y}}=\mathbf{K}_{\mathbf{Z}}+\sigma_n^2\mathbf{I}$ with $\mathbf{K}_{\mathbf{Z}}$ given by \eqref{kron_product33}, and $\mathbf{y}\in\mathbb{R}^{NMN_t}$ denotes the vector of training observations arranged consistently with the Kronecker structure. Rather than forming $\mathbf{K}_{\mathbf{y}}^{-1}$ explicitly, 
we exploit Kronecker algebra and tensor--matrix identities to compute the matrix-vector product 
$\mathbf{K}_{\mathbf{y}}^{-1} \mathbf{y}$.

We first compute the eigendecompositions of the parameter, spatial and temporal covariance matrices as follows:
$
\mathbf{K}_{\boldsymbol{\mu}}=\mathbf{U}_{\boldsymbol{\mu}}\mathbf{D}_{\boldsymbol{\mu}}\mathbf{U}_{\boldsymbol{\mu}}^{T}$, $\mathbf{K}_{x_\ell}=\mathbf{U}_{x_\ell}\mathbf{D}_{x_\ell}\mathbf{U}_{x_\ell}^{T}$, $\ell=1,\ldots,d$, and $\mathbf{K}_{\mathbf{T}}=\mathbf{U}_{\mathbf{T}}\mathbf{D}_{\mathbf{T}}\mathbf{U}_{\mathbf{T}}^{T}$, where $\mathbf{U}_{\boldsymbol{\mu}}$, $\mathbf{U}_{x_\ell}$, and $\mathbf{U}_{\mathbf{T}}$ are orthogonal matrices of eigenvectors, and $\mathbf{D}_{\boldsymbol{\mu}}$, $\mathbf{D}_{x_\ell}$, and $\mathbf{D}_{\mathbf{T}}$ are diagonal matrices of eigenvalues. 
Using standard Kronecker identities (see \ref{Appendix:Kron_properties}), we obtain
\begin{equation}
\begin{aligned}
\mathbf{K}_{\mathbf{y}}^{-1}
&=
\left(
\mathbf{K}_{\boldsymbol{\mu}}
\otimes
\left (
\sbigotimes_{\ell=1}^{d} \mathbf{K}_{x_\ell} \right )
\otimes
\mathbf{K}_{\mathbf{T}}
+\sigma_n^2\mathbf{I}
\right)^{-1} \\
& = \left(
\mathbf{U}_{\boldsymbol{\mu}}
\otimes
\left ( \sbigotimes_{\ell=1}^{d} \mathbf{U}_{x_\ell} \right )
\otimes
\mathbf{U}_{\mathbf{T}}
\right) \; 
\left (  \mathbf{D}_{\boldsymbol{\mu}} \otimes \left ( \sbigotimes_{\ell=1}^{d} \mathbf{D}_{x_\ell} \right )
\otimes
\mathbf{D}_{\mathbf{T}}
+\sigma_n^2\mathbf{I}
    \right )^{-1} \\
    & \qquad \qquad \times
\left ( \mathbf{U}_{\boldsymbol{\mu}}^{T} \otimes \left ( \sbigotimes_{\ell=1}^{d} \mathbf{U}_{x_\ell}^{T} \right ) \otimes \mathbf{U}_{\mathbf{T}}^{T} \right ).
\end{aligned}
\label{K_y_inv_y}
\end{equation}
The computational complexity of the step involving eigendecompositions of $\mathbf{K}_{\boldsymbol{\mu}}$, $\mathbf{K}_{\mathbf{T}}$, and 
$\{\mathbf{K}_{x_\ell}\}_{\ell=1}^{d}$ is 
$\mathcal{O} (N^3 + d\,M^{3/d} + N_t^3 )$ (assuming a balanced spatial grid with $M_\ell = M^{1/d}$),  
which is substantially smaller than the $\mathcal{O}((NMN_t)^3)$ cost of naive dense linear algebra. Although \eqref{K_y_inv_y} provides an explicit factorization, it is more efficient in practice to compute $\mathbf{K}_{\mathbf{y}}^{-1}\mathbf{y}$ without explicitly forming any large intermediate matrices. 

To illustrate, let $\mathcal{Y}$ denote the $(d+2)$-way tensor with dimensions 
$N \times M_1 \times \cdots \times M_d \times N_t$ 
obtained by reshaping $\mathbf{y}\in\mathbb{R}^{NMN_t}$. Using the standard identity relating Kronecker products to multilinear tensor products~\cite{kolda2006multilinear}, we can write
\begin{equation}
\begin{aligned}
\mathbf{K}_{\mathbf{y}}^{-1}\mathbf{y}
&=
\left ( \mathbf{U}_{\boldsymbol{\mu}} \otimes
\left ( \sbigotimes_{\ell=1}^{d} \mathbf{U}_{x_\ell} \right ) \otimes \mathbf{U}_{\mathbf{T}} \right ) \; 
\left( \mathbf{D}_{\boldsymbol{\mu}} \otimes 
\left ( \sbigotimes_{\ell=1}^d \mathbf{D}_{x_\ell}\right ) \otimes \mathbf{D}_{\mathbf{T}} + \sigma_n^2 \mathbf{I} 
\right)^{-1}
\\
&\qquad\qquad\times
\mathbf{vec} \left ( \mathcal{Y} \times_{d+2} \mathbf{U}_{\mathbf{T}}^T \times_{d+1} \mathbf{U}_{x_d}^T \ldots \times_{2} \mathbf{U}_{x_1}^T  \times_1 \mathbf{U}_{\boldsymbol{\mu}}^T \right )  \\
            &= \mathbf{vec} \left ( \left ( \left ( \mathcal{Y} \times_{d+2} \mathbf{U}_{\mathbf{T}}^T \times_{d+1} \mathbf{U}_{x_d}^T \ldots \times_{2} \mathbf{U}_{x_1}^T  \times_1 \mathbf{U}_{\boldsymbol{\mu}}^T \right ) \, \odot \, \mathcal{G}^{-1} \right ) \right . \\ 
            & \left . \qquad \qquad \qquad \times_{d+2} \mathbf{U}_{\mathbf{T}} \times_{d+1} \mathbf{U}_{x_d} \ldots \times_{2} \mathbf{U}_{x_1}  \times_1 \mathbf{U}_{\boldsymbol{\mu}} \right )
\end{aligned}
\label{Kyinv_y_tensor}
\end{equation}
where $\times_i$ denotes mode-$i$ tensor--matrix multiplication, $\mathbf{vec}(\cdot)$ vectorizes a tensor, and $\mathcal{G}$ is the $(d+2)$-way tensor formed by reshaping the diagonal entries of the matrix 
$
\mathbf{D}_{\boldsymbol{\mu}}
\otimes
( \sbigotimes_{\ell=1}^{d} \mathbf{D}_{x_\ell} )
\otimes
\mathbf{D}_{\mathbf{T}}
+\sigma_n^2\mathbf{I}$ into a $N\times M_1\times\cdots\times M_d\times N_t$ tensor.

Using \eqref{Kyinv_y_tensor}, $\mathbf{K}_{\mathbf{y}}^{-1}\mathbf{y}$ can be computed using only multiplications by the small factor matrices (without explicitly forming $\mathbf{K}_{\mathbf{y}}$ or $\mathbf{K}_{\mathbf{y}}^{-1}$), at computational complexity $\mathcal{O}( (N + dM^{\frac{1}{d}} + N_t)NMN_t )$ .
Moreover, since the eigenvalues of the matrix $\mathbf{K_y}$ coincide with the elements of the tensor $\mathcal{G}$, we can calculate the determinant of $\mathbf{K_y}$ needed in the marginal likelihood (\ref{NLML}) as 
\begin{equation}
|\mathbf{K}_{\mathbf{y}}|
=
\prod_{i_1,\ldots,i_{d+2}}
\mathcal{G}_{i_1,\ldots,i_{d+2}}.
\label{logdet_rect}
\end{equation}
The cost of forming $\mathcal{G}$ is dominated by the eigendecompositions of the Kronecker factors, i.e.,
$\mathcal{O}(N^3 + dM^{3/d} + N_t^3)$.

\subsection{Posterior mean and variance computation}
\label{subsec:rect_posterior}

We next consider how to efficiently compute the posterior mean and (diagonal) posterior variance over a spatio-temporal grid for a given test input $\boldsymbol{\mu}_*$ using Kronecker matrix algebra. We consider prediction over a Cartesian product grid of test spatial points and time points:
$
\{\mathbf{z}_*^{(i)}\}_{i=1}^{M_*N_{t*}}
=
\{ \boldsymbol{\mu}_* \} \times 
\mathbf{x}_{*,1}\times\cdots\times\mathbf{x}_{*,d}
\times
\mathbf{T}_*$, i.e., a Cartesian product over the entries of $\mathbf{x}_{*,\ell} = [x_{*,\ell}^{(1)}, \ldots, x_{*,\ell}^{({M_{*,\ell}})} ]^T \in \mathbb{R}^{{M_{*,\ell}}}$ containing the distinct coordinates along the $\ell$th spatial dimension and the vector $\mathbf{T}_* = [t_*^{(1)}, t_*^{(2)}, \ldots, t_*^{(N_{t*})} ]^T \in \mathbb{R}^{N_{t*}}$ containing the distinct test time points. 

The posterior mean and covariance over the full spatio-temporal test grid can be written as
\begin{subequations}
\begin{align}
\widehat{\mathbf{u}}(\mathbf{Z}_*)
&=
\mathbf{m}(\mathbf{Z}_*) + \mathbf{K}_{\mathbf{Z}_*,\mathbf{Z}}\mathbf{K}_{\mathbf{y}}^{-1}(\mathbf{y}-\mathbf{m}),
\label{pred_mean}
\\
\boldsymbol{\Sigma}(\mathbf{Z}_*, \mathbf{Z}_*)
&=
\mathbf{K}_{\mathbf{Z}_*,\mathbf{Z}_*}
-
\mathbf{K}_{\mathbf{Z}_*,\mathbf{Z}}\mathbf{K}_{\mathbf{y}}^{-1}\mathbf{K}_{\mathbf{Z}_*,\mathbf{Z}}^{T},
\label{pred_covar}
\end{align}
\end{subequations}
where $\widehat{\mathbf{u}}(\mathbf{Z}_*) \in \mathbb{R}^{M_*N_{t*}}$ and $\boldsymbol{\Sigma}(\mathbf{Z}_*, \mathbf{Z}_*) \in \mathbb{R}^{(M_*N_{t*}) \times (M_*N_{t*})}$ denote the posterior mean and covariance at the spatio-temporal grid points, respectively, and 
$\mathbf{Z}_* = [\mathbf{z}_*^{(1)}, \mathbf{z}_*^{(2)}, \ldots, \mathbf{z}_*^{(M_*N_{t*})} ]^T \in \mathbb{R}^{(M_*N_{t*}) \times (D+d+1)}$, and 
$M_*=\prod_{\ell=1}^{d} M_{*,\ell}$ is the total number of test spatial points.
Due to the product structured kernel, the cross-covariance  
$\mathbf{K}_{\mathbf{Z}_*,\mathbf{Z}} \in \mathbb{R}^{(M_*N_{t*}) \times (NMN_t)}$ and prior covariance matrices  
$\mathbf{K}_{\mathbf{Z}_*,\mathbf{Z}_*} \in \mathbb{R}^{(M_*N_{t*}) \times (M_*N_{t*})}$ inherit Kronecker structure:
\[
\mathbf{K}_{\mathbf{Z}_*,\mathbf{Z}}
=
\mathbf{k}_{\boldsymbol{\mu}_*}^{T}
\otimes
\left ( \sbigotimes_{\ell=1}^{d} \mathbf{K}_{x_{*,\ell},x_\ell} \right )
\otimes
\mathbf{K}_{\mathbf{T}_*,\mathbf{T}},
\]
\[
\mathbf{K}_{\mathbf{Z}_*,\mathbf{Z}_*}
=
k_{\boldsymbol{\mu}}(\boldsymbol{\mu}_*,\boldsymbol{\mu}_*)
\otimes
\left ( \sbigotimes_{\ell=1}^d \mathbf{K}_{x_{*,\ell},x_{*,\ell}} \right )
\otimes
\mathbf{K}_{\mathbf{T}_*,\mathbf{T}_*},
\]
where
$\mathbf{k}_{\boldsymbol{\mu}_*}=
[
k_{\boldsymbol{\mu}}(\boldsymbol{\mu}_*,\boldsymbol{\mu}^{(1)}),\ldots,
k_{\boldsymbol{\mu}}(\boldsymbol{\mu}_*,\boldsymbol{\mu}^{(N)})
]^T\in\mathbb{R}^{N}$, ${\mathbf{K}}_{x_{*,\ell},x_\ell}\in\mathbb{R}^{M_{*,\ell}\times M_\ell}$, 
and ${\mathbf{K}}_{\mathbf{T}_*,\mathbf{T}}\in\mathbb{R}^{N_{t*}\times N_t}$ are computed using the base kernels and feature maps defined earlier. 

While \eqref{pred_covar} defines the full posterior covariance $\boldsymbol{\Sigma}(\mathbf{Z}_*, \mathbf{Z}_*)$, storing or forming it is impractical for large $M_*N_{t*}$. We therefore focus on computing only the diagonal entries, i.e., 
\begin{equation}
\begin{split}
\mathrm{Var}(\mathbf{Z}_*)
& =
\mathrm{diag}(\boldsymbol{\Sigma}(\mathbf{Z}_*, \mathbf{Z}_*)) \\
& =
\mathrm{diag}(\mathbf{K}_{\mathbf{Z}_*,\mathbf{Z}_*})
-
\mathrm{diag} \left(
\mathbf{K}_{\mathbf{Z}_*,\mathbf{Z}}\mathbf{K}_{\mathbf{y}}^{-1}\mathbf{K}_{\mathbf{Z}_*,\mathbf{Z}}^{T}
\right).
\end{split}
\label{pred_covar_diag}
\end{equation}
The first term, $\mathrm{diag}(\mathbf{K}_{\mathbf{Z}_{*}, \mathbf{Z}_*})$, in the above equation can be efficiently computed using the identity $\mathrm{diag}(\mathbf{A} \otimes \mathbf{B}) = \mathrm{diag}(\mathbf{A}) \otimes \mathrm{diag}(\mathbf{B})$ (for square matrices $\mathbf{A}$ and $\mathbf{B}$) as follows
\begin{equation}
\mathrm{diag}(\mathbf{K}_{\mathbf{Z}_*,\mathbf{Z}_*})
=
k_{\boldsymbol{\mu}}(\boldsymbol{\mu}_*,\boldsymbol{\mu}_*)
\left ( \left ( \sbigotimes_{\ell=1}^d \mathrm{diag} \left ( \mathbf{K}_{x_{*,\ell},x_{*,\ell}} \right ) \right )
\otimes \; 
\mathrm{diag}\left ( \mathbf{K}_{\mathbf{T}_*,\mathbf{T}_*} \right ) 
\right ).
\label{var_first_term}
\end{equation}
To compute the second term in \eqref{pred_covar_diag}, we write \eqref{K_y_inv_y} in the form
$
\mathbf{K}_{\mathbf{y}}^{-1}
=
\mathbf{U}\,\mathrm{diag}(\boldsymbol{\lambda})\,\mathbf{U}^{T}$, where 
$\mathbf{U}  =
\mathbf{U}_{\boldsymbol{\mu}}
\otimes 
\left ( \sbigotimes_{\ell=1}^d \mathbf{U}_{x_\ell} \right )
\otimes
\mathbf{U}_{\mathbf{T}}$ and
$$\mathrm{diag}(\boldsymbol{\lambda})
 =
\left(
\mathbf{D}_{\boldsymbol{\mu}}
\otimes
\left ( \sbigotimes_{\ell=1}^d \mathbf{D}_{x_\ell} \right )
\otimes
\mathbf{D}_{\mathbf{T}}
+\sigma_n^2\mathbf{I}
\right)^{-1}.$$ 
Using the identity $\mathrm{diag}(\mathbf{A}\,\mathrm{diag}(\mathbf{d})\,\mathbf{A}^{T})=(\mathbf{A}\odot\mathbf{A})\mathbf{d}$, where $\odot$ denotes the Hadamard product, we obtain
\begin{equation}
\begin{aligned}
\mathrm{diag} \left (
\mathbf{K}_{\mathbf{Z}_*,\mathbf{Z}}\mathbf{K}_{\mathbf{y}}^{-1}\mathbf{K}_{\mathbf{Z}_*,\mathbf{Z}}^{T}
\right )
&=
\left(
\left ( \mathbf{K}_{\mathbf{Z}_*,\mathbf{Z}}\mathbf{U} \right )
\odot
\left ( \mathbf{K}_{\mathbf{Z}_*,\mathbf{Z}}\mathbf{U} \right )
\right) \, \boldsymbol{\lambda}, 
\end{aligned}
\label{diag_second_term_general}
\end{equation}
where  
$$
\mathbf{K}_{\mathbf{Z}_*,\mathbf{Z}}\mathbf{U}
=
\left ( \mathbf{k}_{\boldsymbol{\mu}_*}^{T}\mathbf{U}_{\boldsymbol{\mu}} \right )
\otimes
\left ( 
    \sbigotimes_{\ell=1}^d \left ( \mathbf{K}_{x_{*,\ell},x_\ell}
    \mathbf{U}_{x_\ell} \right )
\right )
\otimes
\left ( \mathbf{K}_{\mathbf{T}_*,\mathbf{T}}\mathbf{U}_{\mathbf{T}} \right ).
$$
Using the identity $(\mathbf{A}\otimes\mathbf{B})\odot(\mathbf{C}\otimes\mathbf{D})=(\mathbf{A}\odot\mathbf{C})\otimes(\mathbf{B}\odot\mathbf{D})$, \eqref{diag_second_term_general} becomes
\begin{equation}
\begin{aligned}
\mathrm{diag} \left (
\mathbf{K}_{\mathbf{Z}_*,\mathbf{Z}}\mathbf{K}_{\mathbf{y}}^{-1}\mathbf{K}_{\mathbf{Z}_*,\mathbf{Z}}^{T}
\right )
& =
\Bigl[
(\mathbf{k}_{\boldsymbol{\mu}_*}^{T}\mathbf{U}_{\boldsymbol{\mu}}\odot \mathbf{k}_{\boldsymbol{\mu}_*}^{T}\mathbf{U}_{\boldsymbol{\mu}}) \\
& \quad \otimes
\left(
\sbigotimes_{\ell=1}^d \left ( \mathbf{K}_{x_{*,\ell},x_\ell}\mathbf{U}_{x_\ell}\odot \mathbf{K}_{x_{*,\ell},x_\ell}\mathbf{U}_{x_\ell} \right )
\right) \\
& \quad \otimes
(\mathbf{K}_{\mathbf{T}_*,\mathbf{T}}\mathbf{U}_{\mathbf{T}}\odot \mathbf{K}_{\mathbf{T}_*,\mathbf{T}}\mathbf{U}_{\mathbf{T}})
\Bigr] \, \boldsymbol{\lambda}.
\label{var_second_term}
\end{aligned}
\end{equation}
Using \eqref{var_first_term} and \eqref{var_second_term}, we can compute the diagonal entries of the posterior predictive variance $\mathrm{Var}(\mathbf{Z}_*)$ in \eqref{pred_covar_diag}. 

Leveraging Kronecker algebra and structured tensor--matrix multiplications, we can compute the predictive mean in just $\mathcal{O}( (N + dM^{\frac{1}{d}} + N_t)NMN_t )$ operations and $\mathcal{O}( NN_t M + d  M^{\frac{2}{d}})$ storage when the test spatial/time grid equals the training grid. Similarly, the posterior predictive variance can be computed efficiently with  similar computational cost. In typical applications involving high-resolution PDE solutions, the number of spatial points $M$ is significantly larger than the number of parameter points $N$ and the number of time points $N_t$, i.e., $M\gg N$ and $M\gg N_t$.
This results in the computational and storage costs scaling nearly linearly with the number of spatial points $M$ (up to dimension-dependent factors), enabling exact GP training and inference for high-resolution spatio-temporal datasets.

\section{Gaussian process modeling of parametrized spatio-temporal fields on general spatial grids}
\label{section:general-grids}

We now address the case in which the scalar field $u$ is sampled on a general (unstructured) spatial grid. In this setting, the Kronecker factorization leveraged in Section~\ref{section:Struct-GPR} is no longer applicable since the spatial locations do not form a rectilinear Cartesian product grid. A natural alternative is to map the data to a rectilinear grid and then apply the rectilinear-grid method. However, for PDEs posed on general domains (e.g., Figure~\ref{fig:ref_domain_map}), such a mapping typically introduces \emph{gaps}: spatial locations on the rectilinear grid where the field is undefined (for instance, points lying inside a solid body such as the airfoil interior). To overcome this difficulty, we propose a \emph{gappy-grid extension} of the rectilinear-grid method that enables scalable training and inference while accounting for undefined spatial locations.

\subsection{Gappy rectilinear embedding and problem setup}
\label{subsec:gappy_setup}

In the proposed approach, the field $u$ is embedded onto a rectilinear grid with $M$ points denoted by 
$\mathbf{X}=\{\mathbf{x}^{(i)}\}_{i=1}^{M}$. Specifically, we introduce a fixed rectilinear \emph{background} spatial  grid $\mathbf{X}$ that covers the computational domain of interest (e.g., a bounding box in physical space). For each parameter--time snapshot, the field values are represented on $\mathbf{X}$ by transferring the available solution data from the original unstructured mesh to the background grid (e.g., by interpolation or projection). For complex domains this embedding produces gaps, i.e., locations where $u$ is undefined. Grid points in $\mathbf{X}$ that do not belong to the physical domain (or where the transfer is undefined) are treated as missing entries, which we refer to as \emph{gaps}. 

We partition the rectilinear grid into regular and gappy locations as 
$\mathbf{X} = \mathbf{X}_{r}\,\cup\,\mathbf{X}_{g}$ with 
$\mathbf{X}_{r}\cap\mathbf{X}_{g}=\emptyset$, 
where $\mathbf{X}_{r}=\{\mathbf{x}_{r}^{(i)}\}_{i=1}^{N_r}$ are \emph{regular} locations with known values, 
$\mathbf{X}_{g}=\{\mathbf{x}_{g}^{(i)}\}_{i=1}^{N_g}$ are \emph{gappy} locations with undefined values, and $M=N_r+N_g$.

To learn the scalar field $u$, consider a GP prior
$u(\mathbf{z})\sim\mathcal{GP}(m(\mathbf{z}),k(\mathbf{z},\mathbf{z}'))$ \cite{williams2006gaussian},
where $\mathbf{z}=[\boldsymbol{\mu},\mathbf{x},t]^T\in\mathbb{R}^{D+d+1}$, and observations satisfy
$y^{(i)} = u(\mathbf{z}^{(i)}) + \epsilon^{(i)}$ with $\epsilon^{(i)}\sim\mathcal{N}(0,\sigma_n^2)$. In what follows, we will set the prior mean function to zero without loss of generality, i.e., $m(\mathbf{z})\equiv0$. 

If only the regular observations are used to estimate the GP hyperparameters, the training set corresponds to inputs
$\mathbf{Z}_{r}=[\mathbf{z}_{r}^{(1)}, \mathbf{z}_{r}^{(2)}, \ldots, \mathbf{z}_{r}^{(N N_r N_t)}]^T$ with $\mathbf{z}_{r}=[\boldsymbol{\mu},\mathbf{x}_{r},t]^T$,
and observation vector
$\mathbf{y}_{r}\in\mathbb{R}^{N N_r N_t}$. The NLML for this gappy-grid GP model is given by
\begin{equation}
\begin{aligned}
\mathrm{NLML}(\boldsymbol{\theta})
&=
\frac{1}{2} 
 \mathbf{y}_r^{T} \left ( \mathbf{K}_{\mathbf{Z}_r}+\sigma_n^2\mathbf{I} \right )^{-1} \mathbf{y}_r 
 + \frac{1}{2}\log|\mathbf{K}_{\mathbf{Z}_r}+\sigma_n^2\mathbf{I}| \\
& \quad +
\frac{N N_r N_t}{2}\log(2\pi),
\end{aligned}
\label{NLML_gappy}
\end{equation}
where $\mathbf{K}_{\mathbf{Z}_r} =
\mathbf{K}_{\boldsymbol{\mu}}
\otimes
\mathbf{K}_{\mathbf{X}_r}
\otimes
\mathbf{K}_{\mathbf{T}} \in \mathbb{R}^{(N N_r N_t)\times(N N_r N_t)}$ denotes the prior covariance matrix evaluated at the regular training inputs $\mathbf{Z}_r$. The posterior mean at a test input $\mathbf{z}_*$ is given by
\begin{equation}
\widehat{u}(\mathbf{z}_*)= \mathbf{k}(\mathbf{z}_*)^T \left ( \mathbf{K}_{\mathbf{Z}_r} + \sigma_n^2 \mathbf{I} \right )^{-1} \mathbf{y}_r,
\label{pred_mean_gappy}
\end{equation}
where $\mathbf{k}(\mathbf{z}_*) = [ k(\mathbf{z}_*,\mathbf{z}_r^{(1)}), \ldots, k(\mathbf{z}_*,\mathbf{z}_r^{(N N_r N_t)}) ]^T \in \mathbb{R}^{N N_r N_t}$.

Since the spatial covariance matrix over the regular points 
$\mathbf{K}_{\mathbf{X}_r}$ does not have a Kronecker product structure, the computational cost of training this GP model scales as $\mathcal{O}((N^3 + N_r^3 + N_t^3) + (N + N_r + N_t) N N_r N_t)$, which is prohibitive for large spatial grids. We next describe a scalable approach to overcome this challenge.

\subsection{Scalable training and inference}
\label{subsec:gappy_pseudovalues}

To address the aforementioned computational challenge, we require efficient algorithms for the following computations  encountered during GP training and inference: (1)~repeated application of $(\mathbf{K}_{\mathbf{Z}_r}+\sigma_n^2\mathbf{I})^{-1}$ to vectors (e.g., to compute $(\mathbf{K}_{\mathbf{Z}_r}+\sigma_n^2\mathbf{I})^{-1}\mathbf{y}_r$), (2)~computation of the quadratic form $\mathbf{y}_r^T(\mathbf{K}_{\mathbf{Z}_r}+\sigma_n^2\mathbf{I})^{-1}\mathbf{y}_r$ in the NLML, and (3)~approximation of the log-determinant $\log|\mathbf{K}_{\mathbf{Z}_r}+\sigma_n^2\mathbf{I}|$. 
We address the computational challenges arising from the lack of Kronecker product structure by adapting an idea proposed for GP regression of gappy spatial datasets~\cite{evans2018exploiting} to the parametrized spatio-temporal setting. The key idea is to replace operations on the unstructured matrix $\mathbf{K}_{\mathbf{Z}_r}$ by operations on a \emph{larger} linear system whose covariance matrix preserves a Kronecker product structure and therefore admits fast linear algebra operations. 

Let $\mathbf{Z}=[\mathbf{z}^{(1)}, \mathbf{z}^{(2)}, \ldots, \mathbf{z}^{(N M N_t)}]^T$ denote the full set of training inputs after embedding the field $u$ that was originally provided over a general unstructured grid to the rectilinear grid $\mathbf{X}$. Let $\mathbf{T}=[t^{(1)}, t^{(2)}, \ldots, t^{(N_t)}]^T \in \mathbb{R}^{N_t}$ denote the full set of time points, ordered consistently with the Kronecker product structure, and let $\mathbf{y}\in\mathbb{R}^{N M N_t}$ denote the full  observation vector, formed by combining the known values at regular locations $\mathbf{y}_r$ and unknown $\mathbf{y}_g\in\mathbb{R}^{N N_g N_t}$ at the gappy locations.
We partition $\mathbf{X}$ into regular and gappy subsets via the spatial selection matrices $\widetilde{\mathbf{W}}\in\mathbb{R}^{N_r\times M}$ and $\widetilde{\mathbf{V}}\in\mathbb{R}^{N_g\times M}$, and lift them to the parametrized spatio-temporal training set using
${\mathbf{W}}=\mathbf{I}_{N}\otimes \widetilde{\mathbf{W}}\otimes \mathbf{I}_{N_t}$ and 
${\mathbf{V}}=\mathbf{I}_{N}\otimes \widetilde{\mathbf{V}}\otimes \mathbf{I}_{N_t}$, 
so that $\mathbf{y}_r={\mathbf{W}}\mathbf{y}$ and $\mathbf{y}_g={\mathbf{V}}\mathbf{y}$. The selection matrices ${\mathbf{W}}$ and ${\mathbf{V}}$ are sparse matrices with each row containing a single entry equal to one, and they satisfy ${\mathbf{W}}{\mathbf{W}}^{T}=\mathbf{I}$, ${\mathbf{V}}{\mathbf{V}}^{T}=\mathbf{I}$, and ${\mathbf{W}}{\mathbf{V}}^{T}=\mathbf{0}$.

Since $\mathbf{X}$ is rectilinear, the full covariance matrix over $\mathbf{Z}$ takes the Kronecker product form
$\mathbf{K}_{\mathbf{Z}}
=
\mathbf{K}_{\boldsymbol{\mu}}
\otimes
\left(
\bigotimes_{\ell=1}^{d}\mathbf{K}_{x_\ell}
\right)
\otimes
\mathbf{K}_{\mathbf{T}} \in \mathbb{R}^{(N M N_t) \times (N M N_t)} $. We adapt the theoretical analysis of~\cite{evans2018exploiting} to formalize how computations involving linear solves with the unstructured matrix $\mathbf{K}_{\mathbf{Z}_r} + \sigma_n^2 \mathbf{I}$ can be exactly replaced by a larger system involving the Kronecker-structured matrix $\mathbf{K}_{\mathbf{Z}} + \sigma_n^2 \mathbf{I}$ using appropriately defined pseudovalues at the gaps in the spatial grid. 

\begin{lemma}
\label{lem:fill_gaps_equivalence} 
The solution of 
$(\mathbf{K}_{\mathbf{Z}_r}+\sigma_n^2\mathbf{I})\boldsymbol{\alpha}_r=\mathbf{y}_r$ can be written as $\boldsymbol{\alpha}_r := {\mathbf{W}}\boldsymbol{\alpha}$, where $\boldsymbol{\alpha}\in\mathbb{R}^{N M N_t}$ is the solution of the Kronecker product structured system of linear algebraic equations 
\begin{equation}
\label{eq:full_structured_system}
(\mathbf{K}_{\mathbf{Z}}+\sigma_n^2\mathbf{I})\boldsymbol{\alpha}=\mathbf{y}, ~\text{with}~~ \mathbf{y}={\mathbf{W}}^T\mathbf{y}_r+{\mathbf{V}}^T\mathbf{y}_g, 
\end{equation}
where $\mathbf{y}_r\in\mathbb{R}^{NN_rN_t}$ denotes the known observations at regular locations and 
$\mathbf{y}_g\in\mathbb{R}^{NN_gN_t}$ denotes the pseudovalues at the gaps that are uniquely determined by the linear algebraic system of equations:
\begin{equation}
\left (
{\mathbf{V}}\, \left ( \mathbf{K}_{\mathbf{Z}} + \sigma_n^2 \mathbf{I} \right )^{-1} {\mathbf{V}}^{T} \right ) \, \mathbf{y}_{g}
=
-{\mathbf{V}}\,\left ( \mathbf{K}_{\mathbf{Z}} + \sigma_n^2 \mathbf{I} \right )^{-1} {\mathbf{W}}^{T}\mathbf{y}_{r}.
\label{eq:fg_system_lemma}
\end{equation}
\end{lemma}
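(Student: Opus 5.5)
The plan is a block-partitioning argument. Write $\mathbf{A} := \mathbf{K}_{\mathbf{Z}}+\sigma_n^2\mathbf{I}$, which is symmetric positive definite because $\mathbf{K}_{\mathbf{Z}}$ is positive semidefinite and $\sigma_n^2>0$.

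First I will record the algebraic facts about the selection matrices. The rows of $\mathbf{W}$ and $\mathbf{V}$ together select every index of the full grid exactly once, so $\mathbf{P} := [\mathbf{W}^T,\ \mathbf{V}^T]^T$ is a permutation matrix. In addition to the stated relations, this gives $\mathbf{W}^T\mathbf{W}+\mathbf{V}^T\mathbf{V}=\mathbf{I}$. Next I will check that $\mathbf{W}\mathbf{A}\mathbf{W}^T=\mathbf{K}_{\mathbf{Z}_r}+\sigma_n^2\mathbf{I}$. Since $\mathbf{W}=\mathbf{I}_N\otimes\widetilde{\mathbf{W}}\otimes\mathbf{I}_{N_t}$ and $\mathbf{K}_{\mathbf{Z}}=\mathbf{K}_{\boldsymbol{\mu}}\otimes\mathbf{K}_{\mathbf{X}}\otimes\mathbf{K}_{\mathbf{T}}$, the mixed-product rule gives $\mathbf{W}\mathbf{K}_{\mathbf{Z}}\mathbf{W}^T=\mathbf{K}_{\boldsymbol{\mu}}\otimes(\widetilde{\mathbf{W}}\mathbf{K}_{\mathbf{X}}\widetilde{\mathbf{W}}^T)\otimes\mathbf{K}_{\mathbf{T}}$. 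Here $\widetilde{\mathbf{W}}\mathbf{K}_{\mathbf{X}}\widetilde{\mathbf{W}}^T=\mathbf{K}_{\mathbf{X}_r}$, and $\mathbf{W}\mathbf{W}^T=\mathbf{I}$ handles the noise term. Analogous identities hold for the blocks $\mathbf{A}_{rg}=\mathbf{W}\mathbf{A}\mathbf{V}^T$ and $\mathbf{A}_{gg}=\mathbf{V}\mathbf{A}\mathbf{V}^T$.

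For the unique solvability of \eqref{eq:fg_system_lemma}, I will note that $\mathbf{V}\mathbf{A}^{-1}\mathbf{V}^T$ is a principal submatrix of the SPD matrix $\mathbf{A}^{-1}$, because $\mathbf{V}$ has orthonormal rows. It is therefore SPD and hence invertible, so $\mathbf{y}_g$ exists and is unique.

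For the main step, define $\boldsymbol{\alpha}=\mathbf{A}^{-1}\mathbf{y}$ with $\mathbf{y}=\mathbf{W}^T\mathbf{y}_r+\mathbf{V}^T\mathbf{y}_g$. Then
\[
\mathbf{V}\boldsymbol{\alpha}=\mathbf{V}\mathbf{A}^{-1}\mathbf{W}^T\mathbf{y}_r+\mathbf{V}\mathbf{A}^{-1}\mathbf{V}^T\mathbf{y}_g,
\]
and \eqref{eq:fg_system_lemma} says precisely that this quantity vanishes. So the pseudovalues are exactly those that force $\boldsymbol{\alpha}$ to be zero at the gaps, which gives $\boldsymbol{\alpha}=(\mathbf{W}^T\mathbf{W}+\mathbf{V}^T\mathbf{V})\boldsymbol{\alpha}=\mathbf{W}^T\mathbf{W}\boldsymbol{\alpha}$. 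Substituting this into $\mathbf{A}\boldsymbol{\alpha}=\mathbf{y}$ and multiplying on the left by $\mathbf{W}$ yields
\[
\mathbf{W}\mathbf{A}\mathbf{W}^T(\mathbf{W}\boldsymbol{\alpha})=\mathbf{W}\mathbf{y}=\mathbf{W}\mathbf{W}^T\mathbf{y}_r+\mathbf{W}\mathbf{V}^T\mathbf{y}_g=\mathbf{y}_r.
\]
By the identification above, this is $(\mathbf{K}_{\mathbf{Z}_r}+\sigma_n^2\mathbf{I})\mathbf{W}\boldsymbol{\alpha}=\mathbf{y}_r$. Because $\mathbf{K}_{\mathbf{Z}_r}+\sigma_n^2\mathbf{I}$ is SPD and so has a unique solution, $\boldsymbol{\alpha}_r=\mathbf{W}\boldsymbol{\alpha}$.

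The main obstacle is the observation that \eqref{eq:fg_system_lemma} is equivalent to $\mathbf{V}\boldsymbol{\alpha}=\mathbf{0}$; once that is clear, the rest is bookkeeping. The only other point needing care is confirming that the lifted selection matrices commute correctly with the Kronecker factors, so that $\mathbf{W}\mathbf{K}_{\mathbf{Z}}\mathbf{W}^T$ really equals $\mathbf{K}_{\mathbf{Z}_r}$ under the ordering $\mathbf{K}_{\boldsymbol{\mu}}\otimes\mathbf{K}_{\mathbf{X}_r}\otimes\mathbf{K}_{\mathbf{T}}$. The mixed-product property settles this directly.
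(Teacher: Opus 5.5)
Your proof is correct and follows essentially the same route as the paper. Both arguments show that $\mathbf{V}(\mathbf{K}_{\mathbf{Z}}+\sigma_n^2\mathbf{I})^{-1}\mathbf{V}^T$ is SPD, recognize the pseudovalue system as exactly the condition $\mathbf{V}\boldsymbol{\alpha}=\mathbf{0}$, and read off the regular block to get $(\mathbf{K}_{\mathbf{Z}_r}+\sigma_n^2\mathbf{I})\mathbf{W}\boldsymbol{\alpha}=\mathbf{y}_r$; your use of $\mathbf{W}^T\mathbf{W}+\mathbf{V}^T\mathbf{V}=\mathbf{I}$ in place of the explicit block partition, and your mixed-product check that $\mathbf{W}\mathbf{K}_{\mathbf{Z}}\mathbf{W}^T=\mathbf{K}_{\mathbf{Z}_r}$, simply spell out steps the paper states directly.
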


\begin{proof}
We first establish the existence and uniqueness of $\mathbf{y}_g$ in \eqref{eq:fg_system_lemma}. Since $\mathbf{K}_{\mathbf{Z}}+\sigma_n^2\mathbf{I}$ is symmetric positive definite (SPD) by construction for $\sigma_n^2>0$, its inverse exists and is SPD. Furthermore, since the selection matrix ${\mathbf{V}}$ has full row rank, we have ${\mathbf{V}}^T\mathbf{q}\neq \mathbf{0}$ whenever $\mathbf{q}\neq \mathbf{0}$. It then follows that for any nonzero vector $\mathbf{q}\in\mathbb{R}^{NN_gN_t}$ 
$$
\mathbf{q}^T\bigl({\mathbf{V}}(\mathbf{K}_{\mathbf{Z}}+\sigma_n^2\mathbf{I})^{-1}{\mathbf{V}}^T\bigr)\mathbf{q}
=
(\mathbf{V}^T\mathbf{q})^T(\mathbf{K}_{\mathbf{Z}}+\sigma_n^2\mathbf{I})^{-1}(\mathbf{V}^T\mathbf{q}) \, >\, 0.$$ 
We therefore conclude that 
${\mathbf{V}}(\mathbf{K}_{\mathbf{Z}}+\sigma_n^2\mathbf{I})^{-1}{\mathbf{V}}^T$ is SPD and hence invertible, leading to a unique solution $\mathbf{y}_g$ for \eqref{eq:fg_system_lemma}.

Let $\mathbf{y}$ denote the full observation vector and let $\boldsymbol{\alpha} = (\mathbf{K}_{\mathbf{Z}}+\sigma_n^2\mathbf{I})^{-1}\mathbf{y}$ denote the solution of  \eqref{eq:full_structured_system}. Since the selection matrices 
${\mathbf{W}}$ and ${\mathbf{V}}$ select disjoint index sets that partition the full grid (i.e., 
$\mathbf{W}\mathbf{V}^T = \mathbf{0}$, $\mathbf{W}\mathbf{W}^T = \mathbf{I}$, $\mathbf{V}\mathbf{V}^T = \mathbf{I}$), we have $\mathbf{y}={\mathbf{W}}^T\mathbf{y}_r+{\mathbf{V}}^T\mathbf{y}_g$. The coefficients at the gappy locations can be written as
\begin{equation}
\begin{split}
{\mathbf{V}}\boldsymbol{\alpha} & ={\mathbf{V}}(\mathbf{K}_{\mathbf{Z}}+\sigma_n^2\mathbf{I})^{-1}\mathbf{y} = {\mathbf{V}}(\mathbf{K}_{\mathbf{Z}}+\sigma_n^2\mathbf{I})^{-1}({\mathbf{W}}^T\mathbf{y}_r+{\mathbf{V}}^T\mathbf{y}_g) \\
& =
{\mathbf{V}}(\mathbf{K}_{\mathbf{Z}}+\sigma_n^2\mathbf{I})^{-1}{\mathbf{W}}^{T}\mathbf{y}_{r}
+
{\mathbf{V}}(\mathbf{K}_{\mathbf{Z}}+\sigma_n^2\mathbf{I})^{-1}{\mathbf{V}}^{T}\mathbf{y}_{g} \\
& = \mathbf{0} .
\end{split}
\label{eq:alpha_g_zero}
\end{equation}
In the last step, we use \eqref{eq:fg_system_lemma} to substitute for the second term on the right-hand side. This shows that the coefficients at the gappy locations, as derived from \eqref{eq:full_structured_system}, are zero, when the pseudovalues are chosen according to \eqref{eq:fg_system_lemma}, i.e., $\boldsymbol{\alpha}_g:={\mathbf{V}}\boldsymbol{\alpha}=\mathbf{0}$.

Next, we note that the full structured system \eqref{eq:full_structured_system} can be rewritten in partitioned form using the regular and gappy index sets induced by the selection matrices $({\mathbf{W}},{\mathbf{V}})$ as follows:
\begin{equation}
\left (\begin{array}{cc}
{\mathbf{W}} (\mathbf{K}_{\mathbf{Z}}+\sigma_n^2\mathbf{I}) \mathbf{W}^T & {\mathbf{W}} (\mathbf{K}_{\mathbf{Z}}+\sigma_n^2\mathbf{I}) \mathbf{V}^T \\
{\mathbf{V}} (\mathbf{K}_{\mathbf{Z}}+\sigma_n^2\mathbf{I}) \mathbf{W}^T & {\mathbf{V}} (\mathbf{K}_{\mathbf{Z}}+\sigma_n^2\mathbf{I}) \mathbf{V}^T
\end{array}\right )
\left (\begin{array}{c}
\boldsymbol{\alpha}_r \\
\boldsymbol{\alpha}_g
\end{array}\right )
=
\left (\begin{array}{c}
\mathbf{y}_r \\
\mathbf{y}_g
\end{array}\right ).
\label{eq:block_structured_system}
\end{equation}
Since $\boldsymbol{\alpha}_g=\mathbf{0}$ by \eqref{eq:alpha_g_zero}, the first block row can be written as
\begin{equation}
{\mathbf{W}} \left (\mathbf{K}_{\mathbf{Z}}+\sigma_n^2\mathbf{I} \right ) \mathbf{W}^T \boldsymbol{\alpha}_r  =
\mathbf{y}_r.
\label{eq:block_row}
\end{equation}
Noting that ${\mathbf{W}}(\mathbf{K}_{\mathbf{Z}}+\sigma_n^2\mathbf{I}){\mathbf{W}}^T
=
{\mathbf{W}}\mathbf{K}_{\mathbf{Z}}\mathbf{W}^T+\sigma_n^2\mathbf{I}
=
\mathbf{K}_{\mathbf{Z}_r}+\sigma_n^2\mathbf{I}$, it follows that the solution of $\boldsymbol{\alpha}_r$ provided by \eqref{eq:full_structured_system} coincides with the 
solution of the original linear algebraic system with the covariance matrix defined using only the regular points. This completes the proof.
\end{proof}

Lemma~\ref{lem:fill_gaps_equivalence} provides the key justification for the gappy-grid approach: by selecting pseudovalues $\mathbf{y}_g$ through \eqref{eq:fg_system_lemma}, we can exactly replace the original unstructured system with the Kronecker-product structured  full-grid system \eqref{eq:full_structured_system} without altering the resulting coefficients on the regular points. 
In practice, we compute $\mathbf{y}_g$ using the conjugate gradient (CG) method with Kronecker-accelerated MVMs, leading to the full reconstruction $\mathbf{y}={\mathbf{W}}^T\mathbf{y}_r+{\mathbf{V}}^T\mathbf{y}_g$. The computational cost per CG iteration is $\mathcal{O}( (N + dM^{\frac{1}{d}} + N_t)NMN_t )$ using the approach in Section~\ref{section:Struct-GPR}.

Lemma~\ref{lem:fill_gaps_equivalence} enables exact evaluation of the quadratic form in the NLML defined in  \eqref{NLML_gappy} using the solution of the Kronecker structured full-grid system. Indeed, letting $\boldsymbol{\alpha}_r=(\mathbf{K}_{\mathbf{Z}_r}+\sigma_n^2\mathbf{I})^{-1}\mathbf{y}_r$ and $\boldsymbol{\alpha}=(\mathbf{K}_{\mathbf{Z}}+\sigma_n^2\mathbf{I})^{-1}\mathbf{y}$, Lemma~\ref{lem:fill_gaps_equivalence} gives $\boldsymbol{\alpha}_r={\mathbf{W}}\boldsymbol{\alpha}$. Therefore,
\[
\mathbf{y}_r^T(\mathbf{K}_{\mathbf{Z}_r}+\sigma_n^2\mathbf{I})^{-1}\mathbf{y}_r
=
\mathbf{y}_r^T\boldsymbol{\alpha}_r
=
\mathbf{y}_r^T{\mathbf{W}}\boldsymbol{\alpha}
=
({\mathbf{W}}^T\mathbf{y}_r)^T\boldsymbol{\alpha}
=
\mathbf{y}^T(\mathbf{K}_{\mathbf{Z}}+\sigma_n^2\mathbf{I})^{-1}\mathbf{y},
\]
where we used $\mathbf{y}={\mathbf{W}}^T\mathbf{y}_r + \mathbf{V}^T \mathbf{y}_g$ and $\mathbf{V} \boldsymbol{\alpha} = \mathbf{0}$. Consequently, the quadratic form can be computed over the full rectilinear grid using Kronecker-accelerated solves at computational cost $\mathcal{O}((N + dM^{\frac{1}{d}} + N_t)NMN_t)$.

In order to estimate the GP hyperparameters by minimizing the NLML in \eqref{NLML_gappy}, we also need to compute the log-determinant term $\log|\mathbf{K}_{\mathbf{Z}_r}+\sigma_n^2\mathbf{I}|$. We note that $\mathbf{K}_{\mathbf{Z}_r}$ is (up to a permutation of indices) a principal submatrix of $\mathbf{K}_{\mathbf{Z}}$ obtained by restricting the full-grid covariance matrix to the regular locations. Consequently, the eigenvalues of $\mathbf{K}_{\mathbf{Z}_r}$ interlace those of $\mathbf{K}_{\mathbf{Z}}$~\cite{hwang2004cauchy}, i.e., 
$\lambda_i \, \ge\, \tilde\lambda_i \,\ge\, \lambda_{i+n_g}$, 
for $i=1,\ldots,n_r$, where $\lambda_1\ge\cdots\ge\lambda_n$ and $\tilde\lambda_1\ge\cdots\ge\tilde\lambda_{n_r}$ denote  the eigenvalues of $\mathbf{K}_{\mathbf{Z}}$ and $\mathbf{K}_{\mathbf{Z}_r}$, respectively, $n:=NMN_t$, $n_r:=NN_rN_t$, and $n_g:=NN_gN_t$. Since $\log(\cdot)$ is strictly monotonic, it follows that 
\begin{equation}
\sum_{i=1}^{n_r}\log(\lambda_i  +\sigma_n^2)
\;\ge\;
\log | (\mathbf K_{\mathbf{Z}_r} + \sigma_n^2 \mathbf{I}) | 
\;\ge\;
\sum_{i=1}^{n_r}\log(\lambda_{i+n_g} + \sigma_n^2 ),
\label{eq:logdet_bounds}
\end{equation}
which suggests that the log-determinant of $\mathbf{K}_{\mathbf{Z}_r}$ can be approximated using the leading eigenvalues of the structured full-grid matrix $\mathbf{K}_{\mathbf{Z}}$ using a  correction factor to account for the reduced number of observations relative to the full grid. 
Motivated by this property, we employ a Nystr\"{o}m-type correction~\cite{evans2018exploiting,williams2000using,wilson2014fast} of the form
\begin{equation}
\log\left|\mathbf{K}_{\mathbf{Z}_r} + \sigma_n^2 \mathbf{I}\right|
\approx
\sum_{i=1}^{N N_r N_t}
\log\!\left(\frac{N N_r N_t}{N M N_t}\lambda_i + \sigma_n^2\right),
\label{log_det_fill_gaps}
\end{equation}
where $\lambda_i$ are the largest ${NN_{r}N_t}$ eigenvalues of $(\mathbf{K}_{\mathbf{Z}} + \sigma_n^2 \mathbf{I})$.  
Eigenvalues of $\mathbf{K}_{\mathbf{Z}}$ can be computed efficiently using Kronecker algebra in $\mathcal{O}(N^3 + dM^{3/d} + N_t^3)$ operations. Subsequently, the parameters of the deep product kernel can be jointly learned by minimizing the NLML.

In summary, the gappy-grid extension for general unstructured spatial grids involves (i)~embedding the field onto a rectilinear grid with gaps, (ii)~estimating pseudovalues at gappy locations by solving \eqref{eq:fg_system_lemma} using the CG method, and (iii)~minimizing the NLML using computations based on the full observation vector $\mathbf{y}$ formed by combining known values and pseudovalues. We next show how the posterior mean and variance at test points can be efficiently computed for the gappy-grid approach.

\subsection{Efficient computation of the posterior mean}
\label{subsec:posterior_mean_variance}

We now examine how to efficiently compute the posterior mean  over a spatio-temporal grid for a given test parameter $\boldsymbol{\mu}_{*}$ using Kronecker matrix algebra. We consider prediction over a rectilinear spatial grid with $M_*$ points denoted by $\mathbf{X}_* = \{\mathbf{x}_*^{(i)}\}_{i=1}^{M_*}$, leading to the set of test spatio-temporal grid points 
$
\{\mathbf{z}_*^{(i)}\}_{i=1}^{M_*N_{t*}}
=
\{ \boldsymbol{\mu}_* \} \times 
\mathbf{x}_{*,1}\times\cdots\times\mathbf{x}_{*,d}
\times
\mathbf{T}_*$, i.e., a Cartesian product over the entries of $\mathbf{x}_{*,\ell} = [x_{*,\ell}^{(1)}, \ldots, x_{*,\ell}^{({M_{*,\ell}})} ]^T \in \mathbb{R}^{{M_{*,\ell}}}$ containing the distinct coordinates along the $\ell$th spatial dimension and the vector $\mathbf{T}_* = [t_*^{(1)}, t_*^{(2)}, \ldots, t_*^{(N_{t*})} ]^T \in \mathbb{R}^{N_{t*}}$ containing the distinct test time points.  We will compactly denote the full test spatio-temporal grid as 
$\mathbf{Z}_{*} = [\mathbf{z}_*^{(1)}, \mathbf{z}_*^{(2)}, \ldots, \mathbf{z}_*^{(M_* N_{t*})}]^T \in \mathbb{R}^{M_*N_{t*} \times (D+d+1)}$. 

Similar to how we partitioned the training spatial grid $\mathbf{X}$ into regular and gappy subsets, we partition the test spatial grid $\mathbf{X}_*$ into a regular subset $\mathbf{X}_{r*}$ with $N_{r*}$ points and a gappy subset $\mathbf{X}_{g*}$ with $N_{g*}$ points via the spatial selection matrices $\widetilde{\mathbf{W}}_* \in \mathbb{R}^{N_{r*} \times M_*}$ and $\widetilde{\mathbf{V}}_* \in \mathbb{R}^{N_{g*} \times M_*}$, respectively. Our goal is to compute the posterior mean 
at test points $\mathbf{Z}_{r*} = \{ \boldsymbol{\mu}_* \} \times \mathbf{X}_{r*} \times \mathbf{T}_*$. 

The posterior mean over the full test spatio-temporal grid $\mathbf{Z}_{*}$, considering only the regular observations $\mathbf{y}_r$, takes the form  $\widehat{\mathbf{u}}(\mathbf{Z}_{*})
= \mathbf{K}_{\mathbf{Z}_{*},\mathbf{Z}_r} \boldsymbol{\alpha}_r \in \mathbb{R}^{M_* N_{t*}}$, where $\boldsymbol{\alpha}_r=(\mathbf{K}_{\mathbf{Z}_r}+\sigma_n^2\mathbf{I})^{-1}\mathbf{y}_r$. Since $\mathbf{K}_{\mathbf{Z}_r}$ does not have a Kronecker product structure, we will use Lemma~\ref{lem:fill_gaps_equivalence} to obtain an expression for the posterior mean that can be efficiently computed using Kronecker matrix algebra.
We begin by noting that the cross-covariance matrix between the test and regular training points, 
$\mathbf{K}_{\mathbf{Z}_{*},\mathbf{Z}_r}$, can be expressed in terms of the full cross-covariance matrix $\mathbf{K}_{\mathbf{Z}_{*},\mathbf{Z}}$ 
(that includes both regular and gappy training points) 
and the selection matrix $\mathbf{W}$ as 
$ \mathbf{K}_{\mathbf{Z}_{*},\mathbf{Z}_r} = \mathbf{K}_{\mathbf{Z}_{*},\mathbf{Z}} \mathbf{W}^T$. Moreover, the cross-covariance matrix $\mathbf{K}_{\mathbf{Z}_*,\mathbf{Z}} $ $ \in \mathbb{R}^{(M_* N_{t*}) \times (N M N_t)}$ admits the Kronecker product representation 
$
\mathbf{K}_{\mathbf{Z}_*,\mathbf{Z}}
=
\mathbf{k}_{\boldsymbol{\mu}_*}^{T}
\otimes
 ( \sbigotimes_{\ell=1}^{d} \mathbf{K}_{x_{*,\ell},x_\ell} )
\otimes
\mathbf{K}_{\mathbf{T}_*,\mathbf{T}}$, 
where
$\mathbf{k}_{\boldsymbol{\mu}_*}=
[
k_{\boldsymbol{\mu}}(\boldsymbol{\mu}_*,\boldsymbol{\mu}^{(1)}),\ldots,
k_{\boldsymbol{\mu}}(\boldsymbol{\mu}_*,\boldsymbol{\mu}^{(N)})
]^T\in\mathbb{R}^{N}$. 
Using Lemma~\ref{lem:fill_gaps_equivalence} and noting that by definition $\mathbf{W} \mathbf{W}^T = \mathbf{I}$, we can therefore express the posterior mean at the full test spatio-temporal grid $\mathbf{Z}_*$ as
\begin{equation}
    \widehat{\mathbf{u}}(\mathbf{Z}_{*})
    = \mathbf{K}_{\mathbf{Z}_{*},\mathbf{Z}} \mathbf{W}^T {\mathbf{W}}\boldsymbol{\alpha} = 
\left ( \mathbf{k}_{\boldsymbol{\mu}_*}^{T}
\otimes
\left ( \sbigotimes_{\ell=1}^{d} \mathbf{K}_{x_{*,\ell},x_\ell} \right )
\otimes
\mathbf{K}_{\mathbf{T}_*,\mathbf{T}} \right ) 
    \boldsymbol{\alpha},
\label{eq:mean_full_post}
\end{equation}
where $\boldsymbol{\alpha}=(\mathbf{K}_{\mathbf{Z}}+\sigma_n^2\mathbf{I})^{-1}\mathbf{y}$, and $\mathbf{y}={\mathbf{W}}^T\mathbf{y}_r+{\mathbf{V}}^T \mathbf{y}_g$ is the reconstructed full observation vector, $\mathbf{y}_g$ is computed by solving \eqref{eq:fg_system_lemma} using the CG method. 
Finally, we extract the posterior mean at the regular spatial points $\mathbf{Z}_{r*}$ from the full posterior mean $\widehat{\mathbf{u}}(\mathbf{Z}_{*})$ as follows:
\begin{equation}
\widehat{\mathbf{u}}(\mathbf{Z}_{r*})
= (\widetilde{\mathbf{W}}_* \otimes \mathbf{I}_{N_{t*}}) \widehat{\mathbf{u}}(\mathbf{Z}_{*}),
\label{eq:mean_r_post}
\end{equation}
where $\widehat{\mathbf{u}}(\mathbf{Z}_{r*}) \in \mathbb{R}^{N_{r*} N_{t*}}$, $\mathbf{Z}_{r*}=[\mathbf{z}_{r*}^{(1)}, \mathbf{z}_{r*}^{(2)}, \ldots, \mathbf{z}_{r*}^{(N_r N_{t*})}]^T \in \mathbb{R}^{N_{r*} N_{t*} \times (D+d+1)}$ with $\mathbf{z}_{r*}=[\boldsymbol{\mu}_*,\mathbf{x}_{*r},t_*]^T$, and $\widetilde{\mathbf{W}}_* \in \mathbb{R}^{N_{r*} \times M_*}$ is the spatial selection matrix for the regular spatial points on the test spatial grid.

It follows from \eqref{eq:mean_full_post} and \eqref{eq:mean_r_post}
that by leveraging Kronecker algebra and structured tensor--matrix multiplications, we can compute the predictive mean in just $\mathcal{O}( (N + dM^{\frac{1}{d}} + N_t)NMN_t )$ operations and $\mathcal{O}( NN_t M + d  M^{\frac{2}{d}})$ storage when the test spatial/time grid equals the training grid. We note that the posterior mean computed using \eqref{eq:mean_r_post} is exact and consistent with \eqref{pred_mean_gappy} on the regular training inputs by Lemma~\ref{lem:fill_gaps_equivalence}.

\subsection{Efficient computation of the posterior variance}

We next consider efficient computation of the posterior variance at test points using the gappy-grid approach. Our goal is to compute the posterior variance at a test point $\mathbf{z}_*$, considering only the regular observations $\mathbf{y}_r$. Recall that the exact posterior variance at a test input
$\mathbf{z}_*=[\boldsymbol{\mu}_*,\mathbf{x}_*,t_*]^T$ conditioned on the regular training inputs
$\mathbf{Z}_r$ takes the form $\mathrm{Var}(\mathbf{z}_*) = k(\mathbf{z}_*,\mathbf{z}_*) - {\mathbf k}_r(\mathbf{z}_*)^T \left(\mathbf{K}_{\mathbf{Z}_r}+\sigma_n^2 \mathbf{I}\right)^{-1}{\mathbf k}_r(\mathbf{z}_*)$, where ${\mathbf k}_r(\mathbf{z}_*) := \mathbf{K}_{\mathbf{Z}_r,\mathbf{z}_*}\in\mathbb R^{NN_rN_t}$ and $\mathbf{K}_{\mathbf{Z}_r}=\mathbf{K}_{\boldsymbol{\mu}}\otimes \mathbf{K}_{\mathbf{X}_r}\otimes \mathbf{K}_{\mathbf{T}}$. Direct computation of the quadratic form in the posterior variance is not feasible for high-resolution spatial grids since $\mathbf{K}_{\mathbf{X}_r}$ lacks Kronecker structure. We prove a theoretical result that establishes bounds on the posterior variance which can be efficiently computed using Kronecker matrix algebra.

\begin{lemma}
    \label{lem:gappy_posterior_variance_bounds}
    The posterior variance at a test point $\mathbf{z}_*$, $\mathrm{Var}(\mathbf{z}_*)$, considering only the regular observations $\mathbf{y}_r$, is bounded by: 
    \begin{equation}
    \begin{split}
        \mathrm{Var}_{\mathbf{Z}}(\mathbf{z}_*)
        \; \le \;
        \mathrm{Var}(\mathbf{z}_*)
        \; \le \;
        k(\mathbf{z}_*,\mathbf{z}_*)
        - 
        \frac{\|\mathbf k_{\boldsymbol{\mu}_*}\|^2\;\|\mathbf k_{t_*}\|^2\;
        \| \widetilde{\mathbf{W}} \mathbf{k}_{\mathbf x_*} \|^2\ }{\lambda_{\max}(\mathbf{K}_{\mathbf{Z}})+\sigma_n^2},
    \end{split}
    \end{equation}
    where $\mathrm{Var}_{\mathbf{Z}}(\mathbf{z}_*) = k(\mathbf{z}_*,\mathbf{z}_*)
    -
    {\mathbf k}(\mathbf{z}_*)^T ( \mathbf{K}_{\mathbf{Z}}+\sigma_n^2 \mathbf{I} )^{-1}{\mathbf k}(\mathbf{z}_*)$ is the posterior variance at a test point $\mathbf{z}_*$ considering the full grid $\mathbf{Z}$, ${\mathbf k}(\mathbf{z}_*) := \mathbf{K}_{\mathbf{Z},\mathbf{z}_*} = \mathbf{k}_{\boldsymbol{\mu}_*}  \otimes \mathbf{k}_{\mathbf{x}_*} \otimes \mathbf{k}_{t_*} \in \mathbb R^{NMN_t}$, $\mathbf{k}_{\boldsymbol{\mu}_*} \in \mathbb{R}^{N}$,
 $\mathbf{k}_{t_*} \in \mathbb{R}^{N_t}$, 
$\mathbf{k}_{\mathbf{x}_*} \in \mathbb{R}^{M}$ are the  cross-covariance vectors induced by the product kernel, $\mathbf{K}_{\mathbf{Z}}
    =
    \mathbf{K}_{\boldsymbol{\mu}}
    \otimes
    (
    \bigotimes_{\ell=1}^{d}\mathbf{K}_{x_\ell}
    )
    \otimes
    \mathbf{K}_{\mathbf{T}} $ is the full covariance matrix over $\mathbf{Z}$, and 
      $\lambda_{\max}(\mathbf{K}_{\mathbf{Z}})$ is the largest eigenvalue of $\mathbf{K}_{\mathbf{Z}}$.
\end{lemma}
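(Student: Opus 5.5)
The lower bound follows from the monotonicity of posterior variance: conditioning on a superset of inputs cannot increase the variance. The upper bound follows from a Rayleigh-quotient estimate of the quadratic form, together with Cauchy interlacing.

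For the lower bound, write $\mathbf{A}:=\mathbf{K}_{\mathbf{Z}}+\sigma_n^2\mathbf{I}$. Up to a permutation, $\mathbf{A}$ has the block form induced by $(\mathbf{W},\mathbf{V})$ as in \eqref{eq:block_structured_system}, and $\mathbf{k}_r(\mathbf{z}_*)=\mathbf{W}\mathbf{k}(\mathbf{z}_*)$. I will show that
\[
\mathbf{k}^T\mathbf{A}^{-1}\mathbf{k}\;\ge\;(\mathbf{W}\mathbf{k})^T(\mathbf{W}\mathbf{A}\mathbf{W}^T)^{-1}(\mathbf{W}\mathbf{k}).
\]
One route is the block-inverse (Schur complement) formula. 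The difference of the two sides equals $\mathbf{s}^T\mathbf{S}^{-1}\mathbf{s}\ge0$. Here $\mathbf{S}=\mathbf{V}\mathbf{A}\mathbf{V}^T-\mathbf{V}\mathbf{A}\mathbf{W}^T(\mathbf{W}\mathbf{A}\mathbf{W}^T)^{-1}\mathbf{W}\mathbf{A}\mathbf{V}^T$ is SPD because $\mathbf{A}$ is SPD, and $\mathbf{s}=\mathbf{V}\mathbf{k}-\mathbf{V}\mathbf{A}\mathbf{W}^T(\mathbf{W}\mathbf{A}\mathbf{W}^T)^{-1}\mathbf{W}\mathbf{k}$. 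A cleaner route is the variational characterization $\mathbf{k}^T\mathbf{A}^{-1}\mathbf{k}=\max_{\mathbf{v}}(2\mathbf{v}^T\mathbf{k}-\mathbf{v}^T\mathbf{A}\mathbf{v})$. Restricting the maximization to $\mathbf{v}=\mathbf{W}^T\mathbf{w}$ gives exactly the regular-grid quadratic form, and the inequality follows. Subtracting both sides from $k(\mathbf{z}_*,\mathbf{z}_*)$ then gives $\mathrm{Var}_{\mathbf{Z}}(\mathbf{z}_*)\le\mathrm{Var}(\mathbf{z}_*)$.

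For the upper bound, I will use $\mathbf{b}^T\mathbf{B}^{-1}\mathbf{b}\ge\|\mathbf{b}\|^2/\lambda_{\max}(\mathbf{B})$ for SPD $\mathbf{B}$, with $\mathbf{B}=\mathbf{K}_{\mathbf{Z}_r}+\sigma_n^2\mathbf{I}$ and $\mathbf{b}=\mathbf{k}_r(\mathbf{z}_*)$. This holds because the eigenvalues of $\mathbf{B}^{-1}$ are at least $1/\lambda_{\max}(\mathbf{B})$. Since $\mathbf{K}_{\mathbf{Z}_r}=\mathbf{W}\mathbf{K}_{\mathbf{Z}}\mathbf{W}^T$ is a principal submatrix of $\mathbf{K}_{\mathbf{Z}}$, Cauchy interlacing (already invoked before \eqref{eq:logdet_bounds}) gives $\lambda_{\max}(\mathbf{B})\le\lambda_{\max}(\mathbf{K}_{\mathbf{Z}})+\sigma_n^2$. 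Hence $\mathbf{b}^T\mathbf{B}^{-1}\mathbf{b}\ge\|\mathbf{b}\|^2/(\lambda_{\max}(\mathbf{K}_{\mathbf{Z}})+\sigma_n^2)$.

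It remains to evaluate the norm. Because $\mathbf{W}=\mathbf{I}_N\otimes\widetilde{\mathbf{W}}\otimes\mathbf{I}_{N_t}$ and $\mathbf{k}(\mathbf{z}_*)=\mathbf{k}_{\boldsymbol{\mu}_*}\otimes\mathbf{k}_{\mathbf{x}_*}\otimes\mathbf{k}_{t_*}$, the mixed-product property gives $\mathbf{b}=\mathbf{k}_{\boldsymbol{\mu}_*}\otimes(\widetilde{\mathbf{W}}\mathbf{k}_{\mathbf{x}_*})\otimes\mathbf{k}_{t_*}$. Multiplicativity of the Euclidean norm over Kronecker products then yields $\|\mathbf{b}\|^2=\|\mathbf{k}_{\boldsymbol{\mu}_*}\|^2\|\widetilde{\mathbf{W}}\mathbf{k}_{\mathbf{x}_*}\|^2\|\mathbf{k}_{t_*}\|^2$, which is the claimed upper bound.

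The main subtlety is the lower bound: the direction of the inequality between the full-grid and restricted quadratic forms must be established rigorously. The variational argument does this without any appeal to GP intuition, and I would present that version. The upper bound is routine.
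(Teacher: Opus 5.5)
Your proof is correct. The upper bound matches the paper's argument step for step: a Rayleigh quotient for $\mathbf{K}_{\mathbf{Z}_r}+\sigma_n^2\mathbf{I}$, the principal-submatrix eigenvalue bound, then the Kronecker factorization of $\|\mathbf{W}\mathbf{k}(\mathbf{z}_*)\|^2$. You make the mixed-product step more explicit than the paper does.

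For the lower bound, the paper follows the route you list first. It permutes $\mathbf{A}=\mathbf{K}_{\mathbf{Z}}+\sigma_n^2\mathbf{I}$ into regular/gappy blocks and inserts the block-inverse formula. This gives the identity $\mathrm{Var}_{\mathbf{Z}}(\mathbf{z}_*)=\mathrm{Var}(\mathbf{z}_*)-(\mathbf{u}-\mathbf{k}_g)^T\mathbf{S}^{-1}(\mathbf{u}-\mathbf{k}_g)$ with $\mathbf{u}=\mathbf{A}_{gr}\mathbf{A}_{rr}^{-1}\mathbf{k}_r$, which is your $\mathbf{s}^T\mathbf{S}^{-1}\mathbf{s}$.

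The variational argument you prefer is genuinely different and leaner. It uses only that $\mathbf{A}$ is SPD and that $\mathbf{W}\mathbf{A}\mathbf{W}^T=\mathbf{K}_{\mathbf{Z}_r}+\sigma_n^2\mathbf{I}$. It needs neither the block inverse nor positivity of the Schur complement. It also applies unchanged to restriction onto any subspace $\mathrm{range}(\mathbf{W}^T)$, not only coordinate selections.

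In exchange, the Schur-complement route hands you an explicit formula for the gap, showing the lower bound is tight exactly when $\mathbf{k}_g=\mathbf{A}_{gr}\mathbf{A}_{rr}^{-1}\mathbf{k}_r$. Completing the square in your functional recovers the same information: the gap equals $\min_{\mathbf{w}}\|\mathbf{W}^T\mathbf{w}-\mathbf{A}^{-1}\mathbf{k}\|_{\mathbf{A}}^2$, which vanishes iff $\mathbf{V}\mathbf{A}^{-1}\mathbf{k}=\mathbf{0}$.
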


\begin{proof}    
    Since $\mathbf{K}_{\mathbf{Z}_r}+\sigma_n^2 \mathbf{I}$ is SPD, a Rayleigh-quotient bound yields
    \begin{equation}
    \mathrm{Var}(\mathbf{z}_*)
    \;\le\;
    k(\mathbf{z}_*,\mathbf{z}_*)
    -
    \frac{\|{\mathbf k}_r(\mathbf{z}_*)\|^2}{\lambda_{\max}(\mathbf{K}_{\mathbf{Z}_r})+\sigma_n^2}.
    \label{eq:rq_bound_r}
    \end{equation}
    Moreover, since $\mathbf{K}_{\mathbf{Z}_r}$ is a principal submatrix of the full-grid matrix
    $\mathbf{K}_{\mathbf{Z}}=\mathbf{K}_{\boldsymbol{\mu}}\otimes\left(\bigotimes_{\ell=1}^d \mathbf{K}_{x_\ell}\right)\otimes \mathbf{K}_{\mathbf{T}}$, it follows that 
    $\lambda_{\max}(\mathbf{K}_{\mathbf{Z}_r})\le \lambda_{\max}(\mathbf{K}_{\mathbf{Z}})$. Substituting this
    into \eqref{eq:rq_bound_r}, we have 
    \begin{equation}
    \mathrm{Var}(\mathbf{z}_*)
    \;\le\;
    k(\mathbf{z}_*,\mathbf{z}_*)
    -
    \frac{\| {\mathbf k}_r(\mathbf{z}_*)\|^2}{\lambda_{\max}(\mathbf{K}_{\mathbf{Z}})+\sigma_n^2}, 
    \label{eq:rq_bound_structured}
    \end{equation}
    where $\lambda_{\max}(\mathbf{K}_{\mathbf{Z}})$ can be computed efficiently using the Kronecker structure as 
    $\lambda_{\max}(\mathbf{K}_{\boldsymbol{\mu}})
    (\prod_{\ell=1}^d \lambda_{\max}(\mathbf{K}_{x_\ell}) )
    \lambda_{\max}(\mathbf{K}_{\mathbf{T}})$. Recall that $\lambda_{\max}(\mathbf{K}_{\boldsymbol{\mu}})$, $\{\lambda_{\max}(\mathbf{K}_{x_\ell})\}_{\ell=1}^d$, and
    $\lambda_{\max}(\mathbf{K}_{\mathbf{T}})$ are available from the eigendecompositions  computed in Section~\ref{subsec:rect_nlml}.
    
Since ${\mathbf k}_r(\mathbf{z}_*)=\mathbf W\,\mathbf k(\mathbf{z}_*)$, where $\mathbf{W}=\mathbf{I}_N\otimes \widetilde{\mathbf{W}}\otimes \mathbf{I}_{N_t}$ is the selection matrix for the regular spatial points, we can express $\|{\mathbf k}_r(\mathbf{z}_*)\|^2$ in terms of the full cross-covariance vector $\mathbf{k}(\mathbf{z}_*)$ as follows:
    \begin{equation}
    \|{\mathbf k}_r(\mathbf{z}_*)\|^2
    =
    \mathbf k(\mathbf{z}_*)^T \mathbf{W}^T\mathbf{W}\,\mathbf{k}(\mathbf{z}_*) =
    \|\mathbf k_{\boldsymbol{\mu}_*}\|^2\;\|\mathbf k_{t_*}\|^2\;
        \| \widetilde{\mathbf{W}} \mathbf{k}_{\mathbf x_*} \|^2\ ,
    \label{eq:norm_selection}
    \end{equation}
    where we used the fact ${\mathbf k}(\mathbf{z}_*) := \mathbf{K}_{\mathbf{Z},\mathbf{z}_*} = \mathbf{k}_{\boldsymbol{\mu}_*}  \otimes \mathbf{k}_{\mathbf{x}_*} \otimes \mathbf{k}_{t_*} $ and the definition of $\mathbf{W}$ to obtain the last step. Substituting \eqref{eq:norm_selection} into \eqref{eq:rq_bound_structured} yields the upper bound on the posterior variance stated in the lemma.

    To prove the lower bound on the posterior variance, we start with the expression for the posterior variance at a test point $\mathbf{z}_*$ based on the full grid $\mathbf{Z}$ containing both regular and gappy points, which is given by:
    \begin{equation}
    \mathrm{Var}_{\mathbf{Z}}(\mathbf{z}_*)
    =
    k(\mathbf{z}_*,\mathbf{z}_*)
    -
    {\mathbf k}(\mathbf{z}_*)^T \left(\mathbf{K}_{\mathbf{Z}}+\sigma_n^2 \mathbf{I}\right)^{-1}{\mathbf k}(\mathbf{z}_*).
    \label{eq:var_true_def_full}
    \end{equation}
    
   As in \eqref{eq:block_row}, we partition $\left( \mathbf{K}_{\mathbf{Z}}+\sigma_n^2 \mathbf{I} \right)$ into blocks corresponding to regular (r) and gappy (g) index sets using orthogonal permutation matrix $\mathbf{P} = [\mathbf{W}^T, \mathbf{V}^T]^T$, induced by the selection matrices $(\mathbf{W} , \mathbf{V})$:
    \begin{equation}
        \mathbf{A} = \mathbf{P} \left( \mathbf{K}_{\mathbf{Z}}+\sigma_n^2 \mathbf{I} \right) \mathbf{P}^T 
        = \begin{bmatrix}
    \mathbf{A}_{rr} & \mathbf{A}_{rg} \\
    \mathbf{A}_{gr} & \mathbf{A}_{gg}
    \end{bmatrix},
    \end{equation}
    where $\mathbf{A}_{rr} = \mathbf{W}  (\mathbf{K}_{\mathbf{Z}}+\sigma_n^2 \mathbf{I})\mathbf{W}^T = \mathbf{K}_{\mathbf{Z}_r}+\sigma_n^2 \mathbf{I}$ is the covariance matrix for the regular points $\mathbf{Z}_r$. Similarly, we rewrite the cross-covariance vector $\mathbf{k}(\mathbf{z}_*)$ as a block vector containing the cross-covariance vectors for the regular and gappy points: 
    \begin{equation}
        \begin{bmatrix}
        \mathbf{k}_r \\
        \mathbf{k}_g
        \end{bmatrix} = \mathbf{P} \mathbf{k}(\mathbf{z}_*) = \begin{bmatrix}
            \mathbf{W} \mathbf{k}(\mathbf{z}_*) \\
            \mathbf{V} \mathbf{k}(\mathbf{z}_*) 
        \end{bmatrix},
    \end{equation}

    Using the block matrix inversion lemma~\cite{horn2012matrix}, the inverse of the $\mathbf{A}$ can be written as follows:
    \begin{equation}
        \mathbf{A}^{-1} 
         = \begin{bmatrix}
            \mathbf{A}_{rr}^{-1} + \mathbf{A}_{rr}^{-1} \mathbf{A}_{rg} \mathbf{S}^{-1} \mathbf{A}_{gr} \mathbf{A}_{rr}^{-1} & 
            -\mathbf{A}_{rr}^{-1} \mathbf{A}_{rg} \mathbf{S}^{-1} \\
            -\mathbf{S}^{-1} \mathbf{A}_{gr}  \mathbf{A}_{rr}^{-1} &
            \mathbf{S}^{-1}
        \end{bmatrix},
        \label{eq:block_matrix_inverse}
    \end{equation}
    where $\mathbf{S} = \mathbf{A}_{gg} - \mathbf{A}_{gr} \mathbf{A}_{rr}^{-1} \mathbf{A}_{rg}$ is the Schur complement of $\mathbf{A}_{rr}$ in $\mathbf{A}$.

    Substituting $\mathbf{k}(\mathbf{z}_*) =  \mathbf{P}^T \begin{bmatrix}
        \mathbf{k}_r \\
        \mathbf{k}_g
    \end{bmatrix}  $ in \eqref{eq:var_true_def_full}, we get:
    \begin{equation}
    \begin{split}
        \mathrm{Var}_{\mathbf{Z}}(\mathbf{z}_*)
        =
        k(\mathbf{z}_*,\mathbf{z}_*)
        -
        \begin{bmatrix}
            \mathbf{k}_r^T &
            \mathbf{k}_g^T
        \end{bmatrix} \mathbf{P} 
        \left(\mathbf{K}_{\mathbf{Z}}+\sigma_n^2 \mathbf{I}\right)^{-1} \mathbf{P}^T
        \begin{bmatrix}
            \mathbf{k}_r \\
            \mathbf{k}_g
        \end{bmatrix} 
    \end{split}
    \label{eq:var_true_def_full_substituted}
    \end{equation}

    We note that $\mathbf{A}^{-1} = \mathbf{P} \left( \mathbf{K}_{\mathbf{Z}}+\sigma_n^2 \mathbf{I} \right)^{-1} \mathbf{P}^T$ and substituting \eqref{eq:block_matrix_inverse} into \eqref{eq:var_true_def_full_substituted}, we get:
    \begin{equation}
        \begin{split}
        \mathrm{Var}_{\mathbf{Z}}(\mathbf{z}_*)
        &=
        k(\mathbf{z}_*,\mathbf{z}_*)
        -
        \mathbf{k}_r^T \left(\mathbf{A}_{rr}^{-1} + \mathbf{A}_{rr}^{-1} \mathbf{A}_{rg} \mathbf{S}^{-1} \mathbf{A}_{gr} \mathbf{A}_{rr}^{-1} \right) \mathbf{k}_r  \\
        & \quad \quad + 2 \mathbf{k}_r^T \left( \mathbf{A}_{rr}^{-1} \mathbf{A}_{rg} \mathbf{S}^{-1} \right) \mathbf{k}_g - \mathbf{k}_g^T \mathbf{S}^{-1} \mathbf{k}_g \\
        &= \underbrace{ k(\mathbf{z}_*,\mathbf{z}_*)
        - \mathbf{k}_r^T \mathbf{A}_{rr}^{-1} \mathbf{k}_r }_{ \mathrm{Var}(\mathbf{z}_*) } -   \underbrace{\left( \mathbf{u} - \mathbf{k}_g \right)^T \mathbf{S}^{-1} \left( \mathbf{u} - \mathbf{k}_g \right)}_{\Delta \mathbf{Q}}  \\
        \end{split}
    \end{equation}
    where $ \mathbf{u} = \mathbf{A}_{gr} \mathbf{A}_{rr}^{-1} \mathbf{k}_r $ and  $\Delta \mathbf{Q}$ is the quadratic term. Since $\mathbf{A}$ is SPD, its Schur complement $\mathbf{S}$ and $\mathbf{S}^{-1}$ are also SPD. Therefore, we have $\Delta \mathbf{Q} \ge 0$ and hence
$\mathrm{Var}_{\mathbf{Z}}(\mathbf{z}_*) \le \mathrm{Var}(\mathbf{z}_*)$. This completes the proof of the lower bound on the posterior variance.
\end{proof}

Lemma~\ref{lem:gappy_posterior_variance_bounds} provides the bounds on posterior variance $\mathrm{Var}(\mathbf{z}_*)$ for a single test point $\mathbf{z_*}$. In practice, however, we often require the posterior variance  $\mathrm{Var}(\mathbf{Z}_*) \in \mathbb{R}^{M_* N_{t*}}$ over the full spatio-temporal test grid $\mathbf{Z}_{*}$. We show how to efficiently compute the bounds on the (diagonal) posterior variance over the full test grid using Kronecker matrix algebra.
First, the lower bound on $\mathrm{Var}(\mathbf{Z}_*)$ coincides with the posterior variance $\mathrm{Var}_{\mathbf{Z}}(\mathbf{Z}_*)$ computed using the full training grid $\mathbf{Z}$. This can be efficiently computed using the approach in Section~\ref{subsec:rect_posterior} with a computational complexity of $\mathcal{O}( (N + dM^{\frac{1}{d}} + N_t)NMN_t )$ operations, assuming the test grid coincides with the training grid. We can then extract the posterior variance $\mathrm{Var}_{\mathbf{Z}}(\mathbf{Z}_{r*})$ at regular test points $\mathbf{Z}_{r*}$ by applying the selection matrix $\widetilde{\mathbf{W}}_*$ to $\mathrm{Var}_{\mathbf{Z}}(\mathbf{Z}_*)$, i.e.,
$ \mathrm{Var}_{\mathbf{Z}}(\mathbf{Z}_{r*})
    = (\widetilde{\mathbf{W}}_* \otimes \mathbf{I}_{N_{t*}}) \mathrm{Var}_{\mathbf{Z}}(\mathbf{Z}_*),$
where $\mathrm{Var}(\mathbf{Z}_{r*}) \in \mathbb{R}^{N_{r*} N_{t*}}$. 

Next, using the inequality \eqref{eq:rq_bound_structured} from Lemma~\ref{lem:gappy_posterior_variance_bounds}, the upper bound on the posterior variance over $\mathbf{Z}_{*}$ can be written in vectorized form as:
\begin{equation}
    \mathrm{Var}(\mathbf{Z}_*)
    \;\le\;
    \mathrm{diag} ( \mathbf{K}_{\mathbf{Z}_*,\mathbf{Z}_*} )
    -
    \frac{  \left[ q^{(1)}, q^{(2)},   \ldots,   q^{(M_* N_{t*})} \right]^T}{\lambda_{\max}(\mathbf{K}_{\mathbf{Z}})+\sigma_n^2},
    \label{eq:vector_bound_var}
\end{equation}
where $q^{(i)} = \| {\mathbf k}_r(\mathbf{z}_*^{(i)})\|^2 $ and $ {\mathbf k}_r(\mathbf{z}_*^{(i)}) := \mathbf{K}_{\mathbf{Z}_r,\mathbf{z}_*^{(i)}} \in \mathbb{R}^{N N_r N_t}$ is the cross-covariance vector between the test point $\mathbf{z}_*^{(i)}$ and the regular points $\mathbf{Z}_r$.

The first term, $\mathrm{diag}(\mathbf{K}_{\mathbf{Z}_{*}, \mathbf{Z}_*})$, in the above equation can be efficiently computed using the identity $\mathrm{diag}(\mathbf{A} \otimes \mathbf{B}) = \mathrm{diag}(\mathbf{A}) \otimes \mathrm{diag}(\mathbf{B})$ (for square matrices $\mathbf{A}$ and $\mathbf{B}$) as follows
\begin{equation}
\mathrm{diag}(\mathbf{K}_{\mathbf{Z}_*,\mathbf{Z}_*})
=
k_{\boldsymbol{\mu}}(\boldsymbol{\mu}_*,\boldsymbol{\mu}_*)
\left ( \left ( \sbigotimes_{\ell=1}^d \mathrm{diag} \left ( \mathbf{K}_{x_{*,\ell},x_{*,\ell}} \right ) \right )
\otimes \; 
\mathrm{diag}\left ( \mathbf{K}_{\mathbf{T}_*,\mathbf{T}_*} \right ) 
\right ).
\label{var_first_term_computation}
\end{equation}

The second term in \eqref{eq:vector_bound_var} can be written as $ \left[ q^{(1)}, q^{(2)},   \ldots,   q^{(M_* N_{t*})} \right]^T = $  $ \mathrm{diag} (\mathbf{K}_{\mathbf{Z}_*,\mathbf{Z}_r}  \mathbf{K}_{\mathbf{Z}_*,\mathbf{Z}_r}^T )$. Using the relation $\mathbf{K}_{\mathbf{Z}_*,\mathbf{Z}_r}= \mathbf{K}_{\mathbf{Z}_*,\mathbf{Z}} \, \mathbf{W}^T$, we can rewrite this term as
$
    \mathrm{diag} (\mathbf{K}_{\mathbf{Z}_*,\mathbf{Z}_r} \, \mathbf{K}_{\mathbf{Z}_*,\mathbf{Z}_r}^T ) =
    \mathrm{diag} ( \mathbf{K}_{\mathbf{Z}_*,\mathbf{Z}} \, \mathbf{W}^T \, \mathbf{W} \, \mathbf{K}_{\mathbf{Z}_*,\mathbf{Z}}^T ).
$

Since $\mathbf{W}$ is a selection matrix, we have $\mathbf{W}^T \mathbf{W} = \mathbf{P}_r$, where $\mathbf{P}_r$ is a diagonal matrix with ones on the regular indices and zeros on the gaps. Using the identities $\mathrm{diag}(\mathbf{A}\,\mathrm{diag}(\mathbf{d})\,\mathbf{A}^{T})=(\mathbf{A}\odot\mathbf{A})\mathbf{d}$ and $(\mathbf{A}\otimes\mathbf{B})\odot(\mathbf{C}\otimes\mathbf{D})=(\mathbf{A}\odot\mathbf{C})\otimes(\mathbf{B}\odot\mathbf{D})$ used in Section~\ref{subsec:rect_posterior}, we obtain
\begin{eqnarray}
    & & \mathrm{diag} (\mathbf{K}_{\mathbf{Z}_*,\mathbf{Z}_r} \, \mathbf{K}_{\mathbf{Z}_*,\mathbf{Z}_r}^T )  = 
    \left(
     \mathbf{K}_{\mathbf{Z}_*,\mathbf{Z}} 
    \odot
    \mathbf{K}_{\mathbf{Z}_*,\mathbf{Z}} 
    \right) \, \mathrm{diag}(\mathbf{P}_r) \label{eq:vector_upper_bound_second_term_computation} \\
    & & = \left( \left (\mathbf{k}_{\boldsymbol{\mu}_*}^{T} \odot \mathbf{k}_{\boldsymbol{\mu}_*}^{T} \right ) 
    \otimes
    \Bigl ( \sbigotimes_{\ell=1}^{d} \mathbf{K}_{x_{*,\ell},x_\ell} \odot \mathbf{K}_{x_{*,\ell},x_\ell} \Bigr )
    \otimes
    \left ( \mathbf{K}_{\mathbf{T}_*,\mathbf{T}} \odot \mathbf{K}_{\mathbf{T}_*,\mathbf{T}} \right ) 
    \right) \, \mathrm{diag}(\mathbf{P}_r). \nonumber 
\end{eqnarray}
Using \eqref{var_first_term_computation} and \eqref{eq:vector_upper_bound_second_term_computation}, the upper bound on $\mathrm{Var}(\mathbf{Z}_*)$ in \eqref{eq:vector_bound_var} with computational complexity $\mathcal{O}( (N + dM^{\frac{1}{d}} + N_t)NMN_t )$, matching that of the lower bound. The upper bound for the regular test points $\mathbf{Z}_{r*}$ can be extracted by multiplying the result with the selection matrix $(\widetilde{\mathbf{W}}_* \otimes \mathbf{I}_{N_{t*}})$. In summary, computing both the lower and upper bounds on the posterior variance $\mathrm{Var}(\mathbf{Z}_{r*})$ incurs a computational cost comparable to that of the posterior mean.

\subsection{Extension to parametrized spatial domains}
\label{subsec_parametrized_spatial_domains}

\begin{figure}[t]
    \centering
    \begin{tikzpicture}
    \node (img1) {\includegraphics[width=5cm]{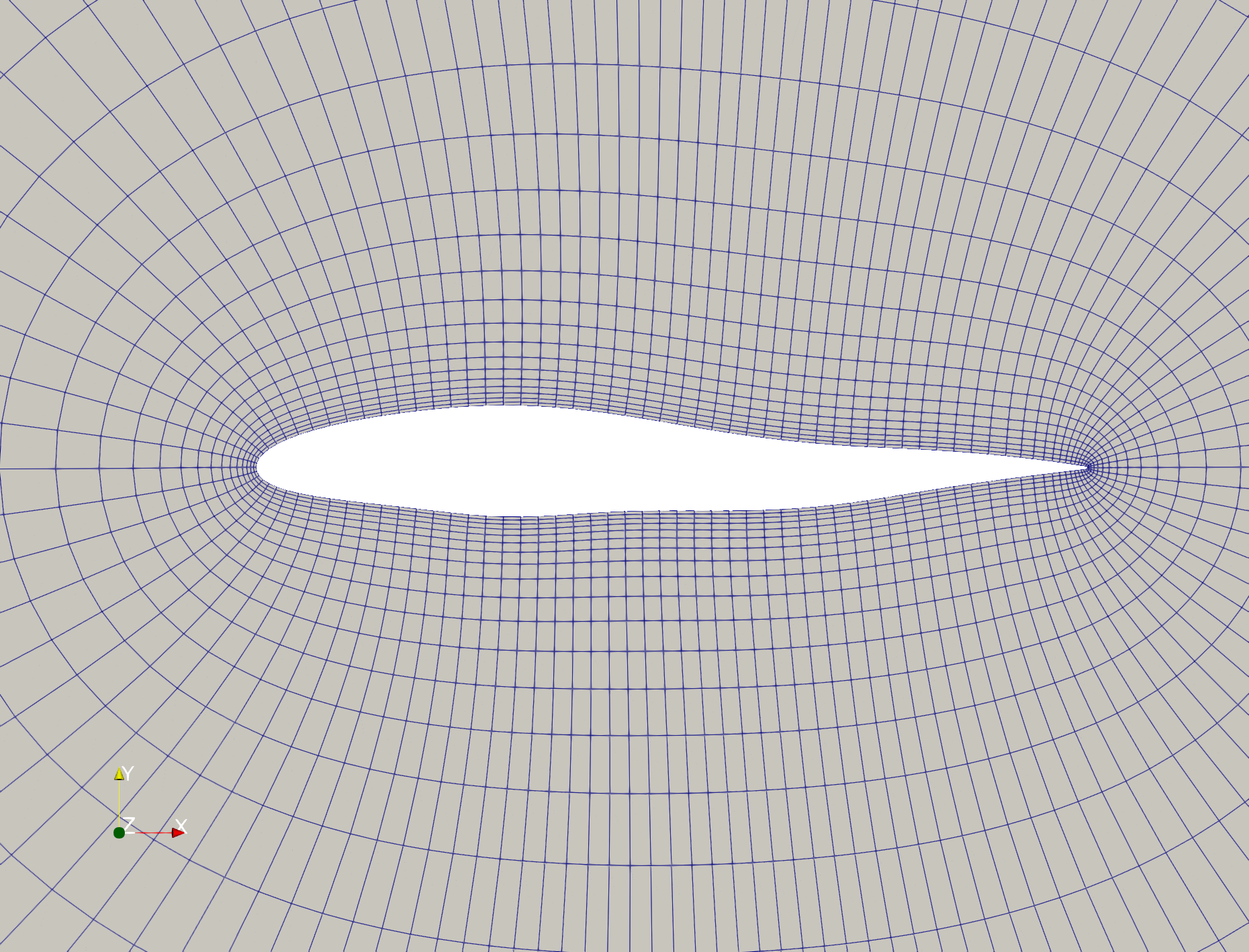}};
    \node[below=0.02cm of img1, inner sep=0pt] (label1) {$\Omega(\boldsymbol{\mu}_{\Omega})$};
    \node[right=2cm of img1] (img2) {\includegraphics[width=5cm]{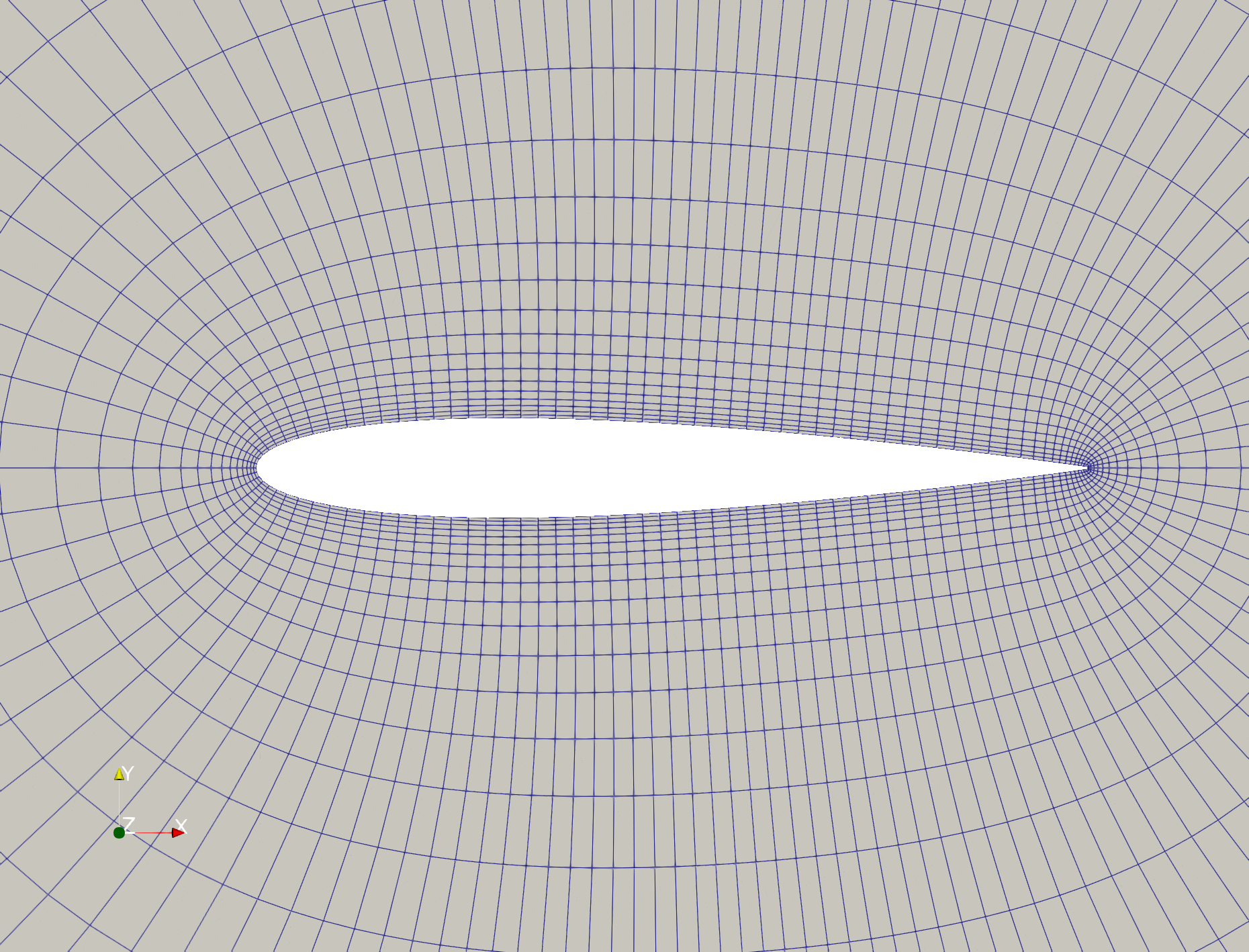}};
    \node[below=0.02cm of img2, inner sep=0pt] (label2) {$\tilde{\Omega}$};
    \draw[->] (img1) -- (img2);
    \end{tikzpicture}
    \caption{Illustration of the geometric transformation: mapping a parametrized physical domain $\Omega(\boldsymbol{\mu}_{\Omega})$ (left) to a fixed reference domain 
    $\tilde{\Omega}$ (right) to enable tensor-product grid embedding.}
    \label{fig:ref_domain_map}
\end{figure}

Thus far we have assumed that, after embedding, the learning problem is posed on a single rectilinear reference grid. In many applications (e.g., aerodynamic shape optimization), the computational domain depends on geometric parameters $\boldsymbol{\mu}_{\Omega}$, and snapshots are obtained on parameter-dependent meshes. For example, consider a PDE defined over a domain parametrized by $\boldsymbol{\mu}_{\Omega}\in\mathcal{P}_{\Omega}\subset\mathbb{R}^{D_{\Omega}}$:
\begin{equation}
    \begin{aligned} 
        \frac{\partial{u(\mathbf{x}, t)}}{\partial{t}} + \mathcal{N}(u(\mathbf{x}, t)) = f(\mathbf{x}, t) \quad \forall \, \left ( \mathbf{x}, t \right ) \in \Omega(\boldsymbol{\mu}_{\Omega}) \times \mathcal{T}. \nonumber
    \end{aligned}  
\end{equation}
When dealing with parametrized domains, a standard approach~\cite{mohan2011stochastic, manzoni2016mathicse} is to map the spatial domain for each geometry parameter $\boldsymbol{\mu}_{\Omega}$ to a fixed reference domain $\tilde{\Omega}$. This step leads to a parametrized PDE defined over the fixed reference domain, which takes the form 
\begin{equation}
    \begin{aligned} 
        \frac{\partial{u(\tilde{\mathbf{x}}, t; \boldsymbol{\mu}_{\Omega} )}}{\partial{t}} + \mathcal{N}(u( \tilde{\mathbf{x}}, t; \boldsymbol{\mu}_{\Omega} ); \boldsymbol{\mu}_{\Omega}) = f(\tilde{\mathbf{x}}, t; \boldsymbol{\mu}_{\Omega}), \; \forall (\tilde{\mathbf{x}}, t) \in \tilde{\Omega} \times \mathcal{T}, \nonumber 
    \end{aligned}  
\end{equation}
where $\tilde{\mathbf{x}} = \Phi(\mathbf{x}; \boldsymbol{\mu}_{\Omega}) $ with $\Phi(\cdot ; \boldsymbol{\mu}_{\Omega}): \Omega(\boldsymbol{\mu}_{\Omega}) \rightarrow \tilde{\Omega} $ denoting a mapping from the parametrized domain to the fixed reference domain.

The transformed PDE can be solved in the reference domain where $\boldsymbol{\mu}_{\Omega}$ can be treated as a domain-independent parameter, as in equation \eqref{unsteady-pde}. In other words, the learning problem can be  framed within the reference domain, with the predictions mapped back to the parametrized domain. 
When the mapped reference grid is rectilinear and complete, the rectilinear-grid method applies directly; when the mapping produces gaps (as is typical for complex geometries), the gappy-grid extension can be used to retain scalability. We illustrate this approach later in our numerical studies.

%% file: chapters/Results.tex
\section{Results}\label{sec_results}

This section presents a comprehensive set of numerical experiments on several challenging benchmark problems to validate the accuracy and scalability of the proposed framework. These test cases span a range of physical phenomena, including fluid dynamics and solid mechanics, and include data on both rectilinear and unstructured grids, as well as problems with parametrized spatial domains. We evaluate the performance of our GP framework equipped with the deep product kernel, and compare it against established physics-based reduced-order models and state-of-the-art data-driven methods such as Fourier Neural Operators \cite{kovachki2023neural, li2020fourier} and DeepONets \cite{lu2021learning, lu2022comprehensive, goswami2022deep}. For problems in which the input parameter vector is high-dimensional, we first employ Principal Component Analysis (PCA) to project the parameters onto a lower-dimensional latent space before training the GP models. Additional details about the neural network architecture used in the deep product kernel and the training setup are provided in \ref{Appendix:Additional_details}.

\subsection{1D unsteady Burgers’ problem}\label{subsec_burgers}

We evaluate the proposed GP framework on a one-dimensional parametrized inviscid Burgers' problem \cite{rewienski2003trajectory, Lee_Carlberg_2019}. The governing PDE for this problem, along with its initial and boundary conditions, is given by
\begin{equation}
    \begin{aligned}
        \frac{\partial{u(x, t)}}{\partial{t}} + \frac{\partial{f(u(x, t)})}{ \partial x} &= 0.02e^{\mu_2 x}, \forall x \in [0, 100], \forall t \in [0, T], \\
        u(0, t) = \mu_1, \forall t \in [0, T] \quad \textrm{and} & \quad u(x, 0) = 1, \forall x \in [0, 100],
    \end{aligned}
\end{equation}
where $f(u) = 0.5u^2$, $u$ is the conserved quantity, and $\boldsymbol{\mu} = (\mu_1, \mu_2)$ is the two-dimensional parameter vector. To facilitate a direct comparison with the physics-based methods reported in \cite{Lee_Carlberg_2019}, we adopt the following parameter configuration:  $(\mu_1, \mu_2) \in [4.25, 5.50] \times [0.015, 0.03]$, final time $T=35$, and time step $\Delta t=0.07$. Each solution snapshot of $u$ for a given parameter vector $\boldsymbol{\mu}$ is defined on a 1D uniformly spaced grid of size $M=256$ over $N_t = 500$ time steps. We use $N=80$ snapshots for training, corresponding to the parameters
$(\mu_1, \mu_2) = \{(4.25 + (1.25/9)i, 0.015 + (0.015/7)j)\}_{i=0, j=0}^{9, 7}$. This results in a total of 10.24 million training points ($N M N_t$). To evaluate the performance, we use the following relative error metric,
\begin{equation}
    \begin{aligned} 
        l_2 \; \textrm{relative test error} = \frac{\| \mathbf{u}_h(\boldsymbol{\mu}^{(i)}) - \widehat{\mathbf{u}}(\boldsymbol{\mu}^{(i)})  \|_{2}}{\| \mathbf{u}_h(\boldsymbol{\mu}^{(i)})\|_{2}},
        \label{relative_error}
    \end{aligned} 
\end{equation}
where $\mathbf{u}_h(\boldsymbol{\mu}^{(i)})$ and $\widehat{\mathbf{u}}(\boldsymbol{\mu}^{(i)})$ represent high-fidelity and predicted snapshots for a given test parameter $\boldsymbol{\mu}^{(i)}$, respectively.

\begin{figure}[ht]
    \centering
    \begin{subfigure}[b]{0.49\linewidth}
        \includegraphics[width=\linewidth]{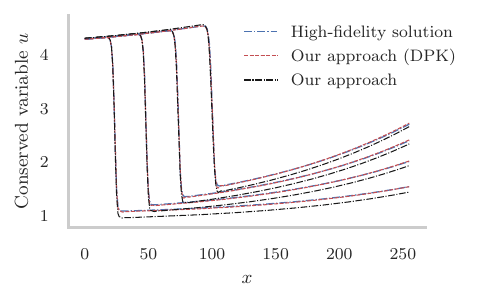}
        \caption{test parameter $\boldsymbol{\mu}^{(1)}_{test}=[4.3, 0.021]$}
        \label{fig:sub1}
    \end{subfigure}
    \hfill
    \begin{subfigure}[b]{0.49\linewidth}
        \includegraphics[width=\linewidth]{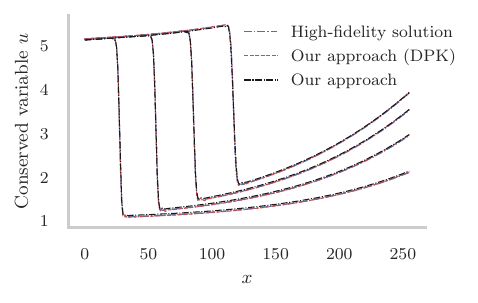}
        \caption{test parameter $\boldsymbol{\mu}^{(2)}_{test}=[5.15, 0.0285]$}
        \label{fig:sub2}
    \end{subfigure}
    \caption{\textit{1D Burgers' problem.} Predicted solutions using the proposed GP framework with DPK and Mat\'ern-$5/2$ based product structured kernel at times $t = 3.5, 7.0, 10.5,$ and $14$ are shown.}
    \label{fig:1D_burgers_results}
\end{figure}

\begin{table}[ht]
    \small
    \centering
    \begin{tabular}{c|cc}
        \hline
        \textbf{Method}	& \multicolumn{2}{c}{\textbf{Relative error}} \\ 
        & $\boldsymbol{\mu}^{(1)}_{test} $  & $\boldsymbol{\mu}^{(2)}_{test} $ \\ \hline \hline
        Deep-LSPG ($p=20$) & \textbf{0.0009} & \textbf{0.001} \\ 
        Proposed GP (DPK-Mat\'ern-$5/2$) & \textbf{0.0033} & \textbf{0.0029} \\ 
        Deep-Galerkin ($p=20$) & 0.015 & 0.013 \\ 
        Proposed GP (Mat\'ern-$5/2$) & 0.0211	& 0.0078\\ 
        POD-Galerkin ($p=20$) &	0.032 &	0.03 \\ 
        POD-LSPG ($p=20$) &	0.03 &	0.029 \\ \hline
    \end{tabular}
    \caption{\textit{1D Burgers' problem.} Relative test errors for the test snapshots corresponding to the parameters $\boldsymbol{\mu}^{(1)}_{test} = (4.3, 0.021)$ and $\boldsymbol{\mu}^{(2)}_{test} = (5.15, 0.0285)$ are presented. Errors for the physics-based methods (POD-Galerkin, POD-LSPG, Deep-Galerkin, and Deep-LSPG) for reduced dimension $p=20$ are obtained from Figure~3 in \cite{Lee_Carlberg_2019}.}
    \label{table:1D_burgers_results}
\end{table}

\begin{figure}[ht]
    \centering
    \begin{subfigure}[b]{0.49\linewidth}
        \includegraphics[width=\linewidth]{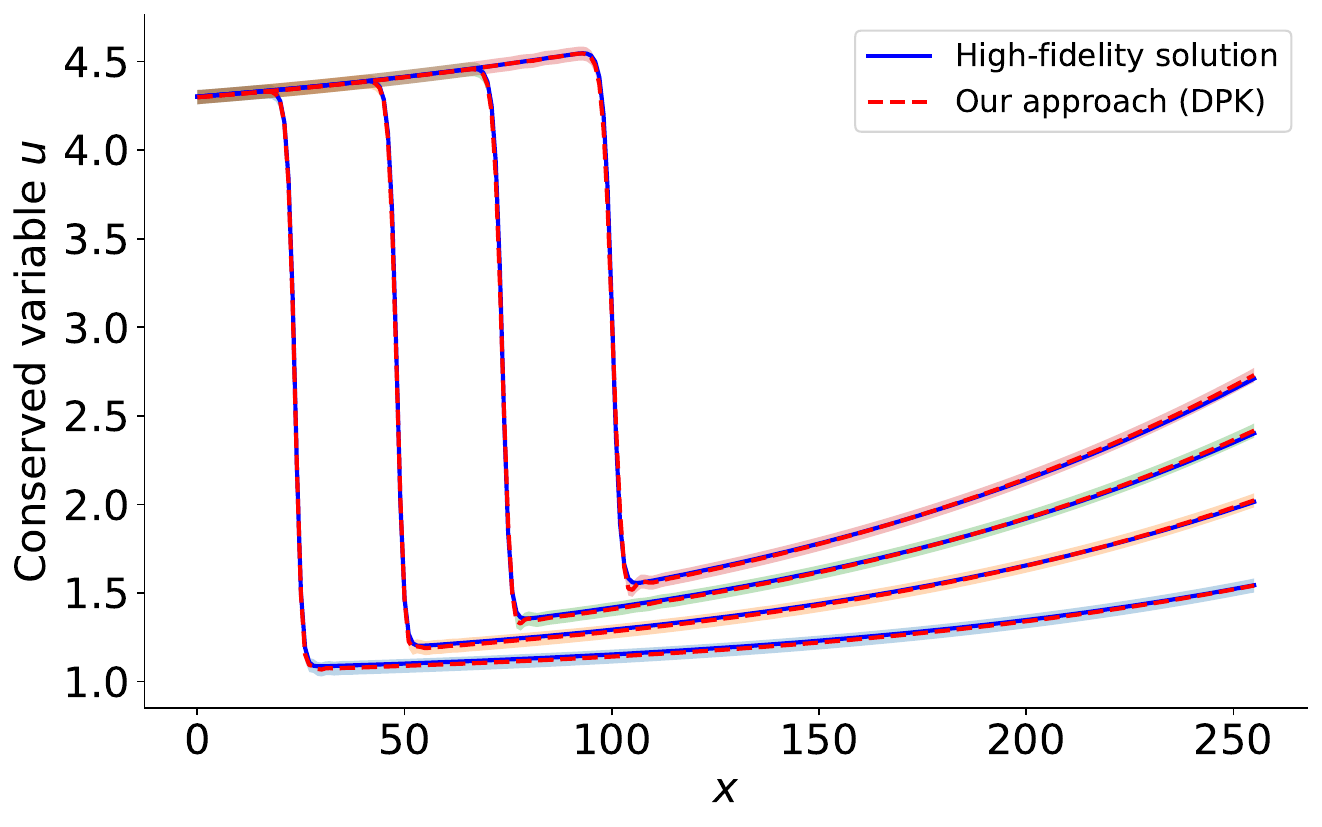}
    \end{subfigure}
    \hfill
    \begin{subfigure}[b]{0.49\linewidth}
        \includegraphics[width=\linewidth]{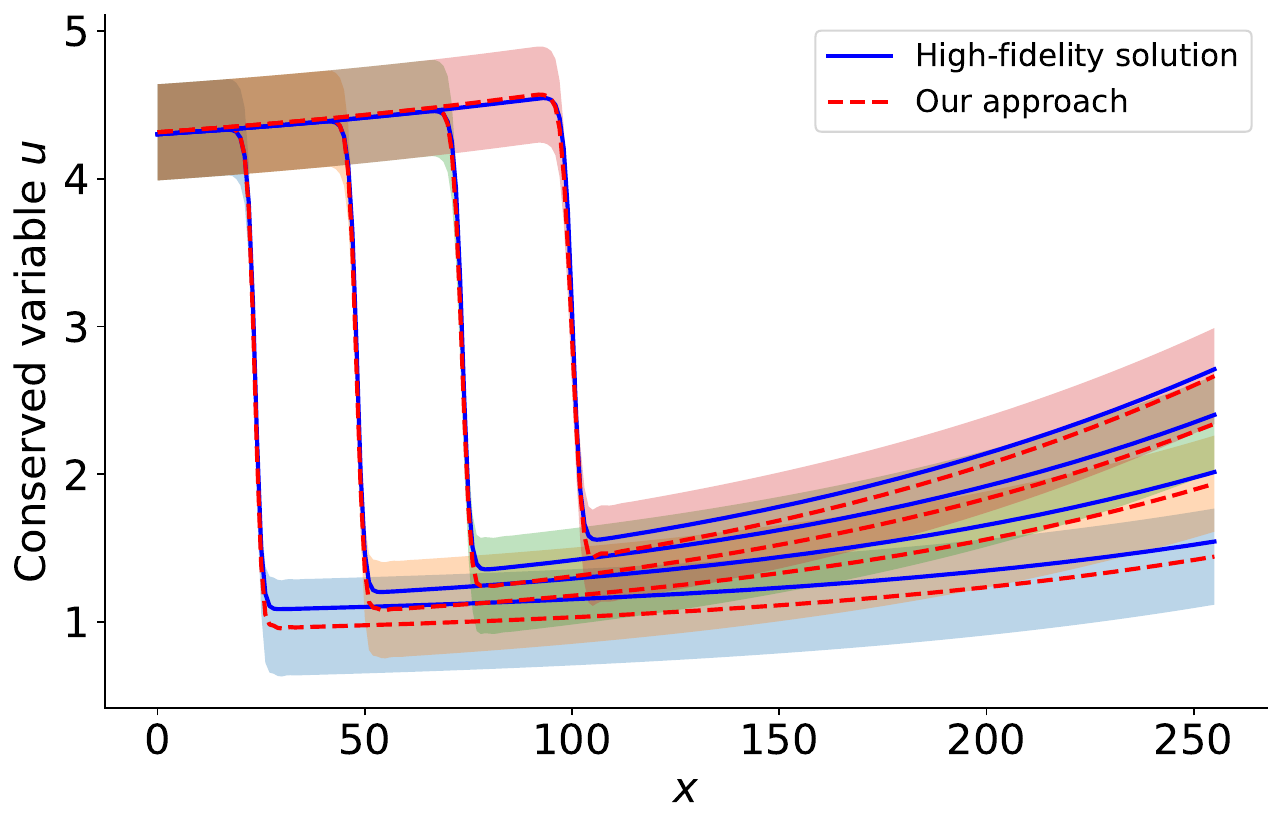}
    \end{subfigure}
    \caption{\textit{1D Burgers' problem.} For test parameter $\boldsymbol{\mu}^{(1)}_{test}=[4.3, 0.021]$, predicted solutions and confidence bounds ($\pm 2\sigma$) using the proposed GP framework with DPK (left) and Mat\'ern-$5/2$ based product kernel (right) at times $t = 3.5, 7.0, 10.5,$ and $14$ are shown.}
    \label{fig:1D_burgers_results_test_0_dkl_matern}
\end{figure}

\begin{figure}[ht]
    \centering
    \begin{subfigure}[b]{0.49\linewidth}
        \includegraphics[width=\linewidth]{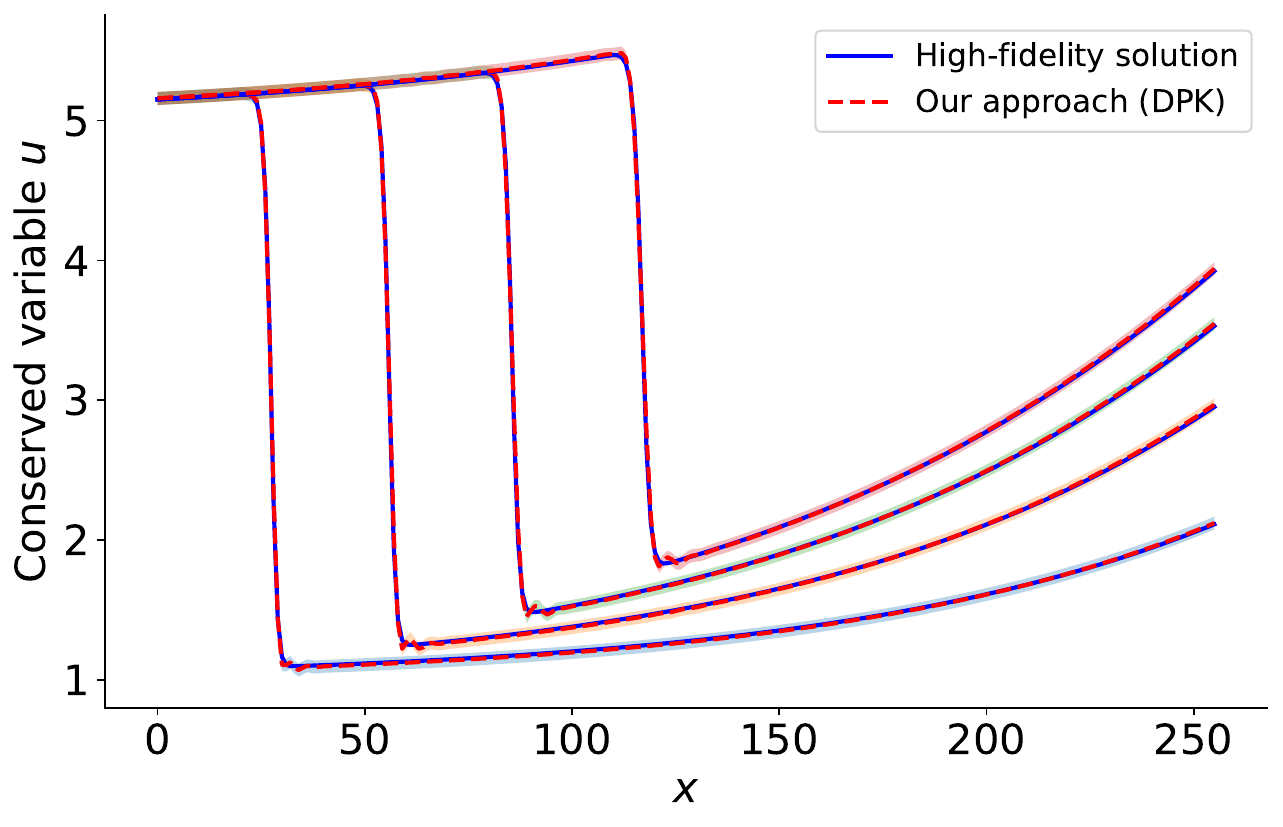}
    \end{subfigure}
    \hfill
    \begin{subfigure}[b]{0.49\linewidth}
        \includegraphics[width=\linewidth]{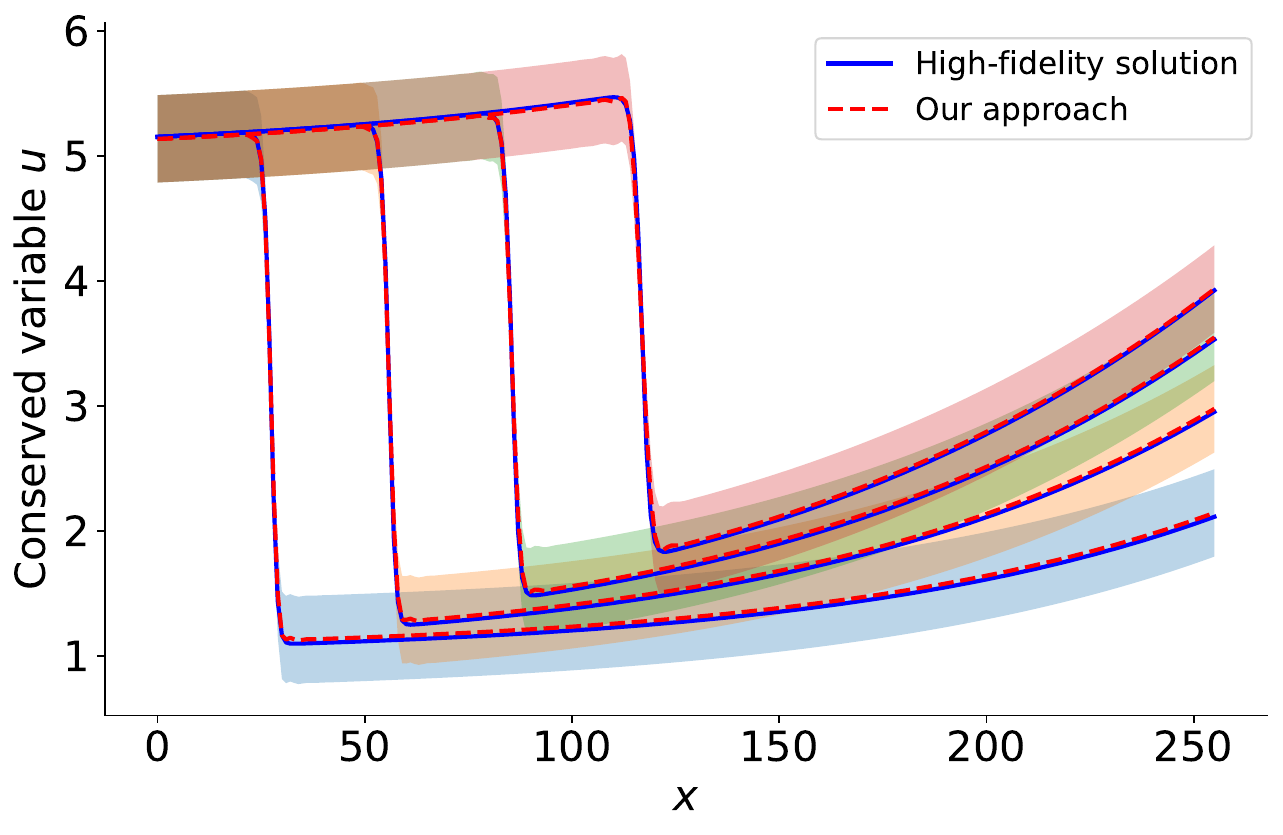}
    \end{subfigure}
     \caption{\textit{1D Burgers' problem.} For test parameter $\boldsymbol{\mu}^{(2)}_{test}=[5.15, 0.0285]$, predicted solutions and confidence bounds ($\pm 2\sigma$) using the proposed GP framework with DPK (left) and Mat\'ern-$5/2$ based product kernel (right) at times $t = 3.5, 7.0, 10.5,$ and $14$ are shown.}
    \label{fig:1D_burgers_results_test_1_dkl_matern}
\end{figure}

The GP framework presented in Section~\ref{section:Struct-GPR} is employed to learn the conserved quantity $u$ as a function of the parameter vector $\boldsymbol{\mu}$, spatial location $x$, and time $t$ using the 80 training snapshots. The Mat\'ern-$\frac{5}{2}$ kernel serves as the base stationary kernel for both the Deep Product Kernel (DPK) and the stationary product-kernel models. For testing, we consider two parameter settings, $\boldsymbol{\mu}^{(1)}_{test}=[4.3, 0.021]$ and $\boldsymbol{\mu}^{(2)}_{test}=[5.15, 0.0285]$. Figure~\ref{fig:1D_burgers_results} shows the predicted solutions of our method with the DPK and Mat\'ern-$5/2$ based product structured kernel for the two test parameters at times $t = 3.5, 7.0, 10.5,$ and $14$. The corresponding relative test errors are reported in Table~\ref{table:1D_burgers_results}. 
As detailed in Table~\ref{table:1D_burgers_results}, our method with DPK significantly outperforms the projection-based methods (POD-Galerkin, POD-LSPG, and Deep-Galerkin) and provides a substantial improvement over the GP with a standard stationary product-kernel. Furthermore, the accuracy of our method is competitive with the Deep-LSPG approach ($p=20$), offering a purely data-driven alternative to physics-based ROMs.

 Figures~\ref{fig:1D_burgers_results_test_0_dkl_matern} and \ref{fig:1D_burgers_results_test_1_dkl_matern} show the predicted solutions and confidence bounds ($\pm 2\sigma$) for the two test parameters at times $t = 3.5, 7.0, 10.5,$ and $14$. The confidence bounds are tighter for the GP with DPK compared to the GP with stationary product kernel. In Appendix~\ref{Appendix:Elastic_block_problem}, we evaluate the learning capacity of the presented GP framework when the training data is increased from 50 to 5000 snapshots. From this study, we observe that our method with the deep product kernel (DPK) shows significantly lower test errors than the data-driven POD-GPR. Furthermore, as the number of training snapshots $N$ increases, the test error of the proposed GP framework consistently decreases, eventually surpassing the accuracy of the physics-based POD-Galerkin, as shown in Figure~\ref{fig:elastic_block_convergence_sub2}.

\subsection{Hyper-elastic problem}\label{subsec_hyperelasticmaterial}

In this problem, we consider a hyper-elastic material on the unit domain $\Omega = [0,1]\times[0,1]$ that contains a void of arbitrary shape at the center, as illustrated in Figure~\ref{fig:elasticity_pred_0} \cite{li2023fourier}. The bottom edge of the unit cell is clamped, and a tensile traction $\mathbf{t}_{\mathrm{trac}} = [0,100]^T$ is applied on the top edge. The prior for the void radius is
$r = 0.2 + \frac{0.2}{1+\exp(\tilde{r})}$ with 
$\tilde{r} \sim \mathcal{N}\!\left(0,\,4^2(-\nabla + 3^2)^{-1}\right)$, 
which satisfies the constraint $r \in [0.2,0.4]$.
The dynamics of the elastic material are governed by the  PDE:
$    \rho \frac{\partial^2 \mathbf{u}}{\partial t^2} + \nabla \cdot \boldsymbol{\sigma} = 0$, where $\rho$ is the mass density, $\mathbf{u}$ is the displacement vector, and $\boldsymbol{\sigma}$ is the stress tensor. Here, the material behavior is characterized by the incompressible Rivlin-Saunders constitutive model, which uses energy density function parameters $C_1 = 1.863 \times 10^5$ and $C_2 = 9.79 \times 10^3$. 

We use the data provided in \cite{li2023fourier} for both training and testing. The training dataset consists of 1000 snapshots, and the testing dataset consists of 200 snapshots generated using a finite element solver and provided on an O-type mesh with 65 and 41 points in the azimuth and radial directions, respectively. To construct the parameter vector $\boldsymbol{\mu}$, we concatenate the $x$ and $y$ coordinates of the deformed mesh nodes and apply PCA to reduce the dimensionality; the coefficients corresponding to the first 20 PCA modes are used to define $\boldsymbol{\mu}$.

The goal is to learn the stress field as a function of the deformed mesh (which encodes the void shape). Since the domain is parametrized, we first map the snapshots onto a fixed reference domain as described in Section~\ref{subsec_parametrized_spatial_domains}, and employ our GP framework presented in Section~\ref{section:general-grids} to learn the stress field in the reference domain. Training snapshots in the reference domain are mapped onto a $120 \times 120$ rectilinear grid, which contains gaps corresponding to points within the void section. During testing, the predicted stress field is mapped back to the deformed O-type mesh with 65 and 41 grid points. For evaluation, we compare relative test errors against machine learning (ML) based operator methods, including Geo-FNO \cite{li2023fourier} and DeepONet \cite{lu2021learning}. From Table~\ref{table:elasticity_darcy}, we observe that our method with a deep product kernel (DPK) outperforms FNO and DeepONet and is slightly better than Geo-FNO. Figure~\ref{fig:elasticity_pred_0} shows a comparison of the predicted stress field with the high-fidelity solution for a sample test geometry.

\begin{table}[h]
  \centering
  \begin{tabular}{c|cc}
  \hline
  \multirow{2}{*}{\textbf{Method}} & \multicolumn{2}{c}{\textbf{Relative error}}  \\
  & training & testing  \\ \hline \hline
  Proposed GP (DPK) & 0.0181 & \textbf{0.0326}  \\
  Geo-FNO (O-mesh) & 0.0344 & 0.0363  \\
  FNO interpolation & 0.0314 & 0.0508 \\
  DeepONet & 0.0528 & 0.0965  \\ \hline
  \end{tabular}
  \caption{\textit{Elasticity problem.} Relative errors for the proposed GP framework compared with ML-based operator methods, including Geo-FNO \cite{li2023fourier} and DeepONet \cite{lu2021learning}. Best testing results are highlighted in bold.}
  \label{table:elasticity_darcy}
\end{table}

\begin{figure}[h]
  \centering
  \begin{subfigure}[b]{0.66\linewidth}
      \includegraphics[width=\linewidth]{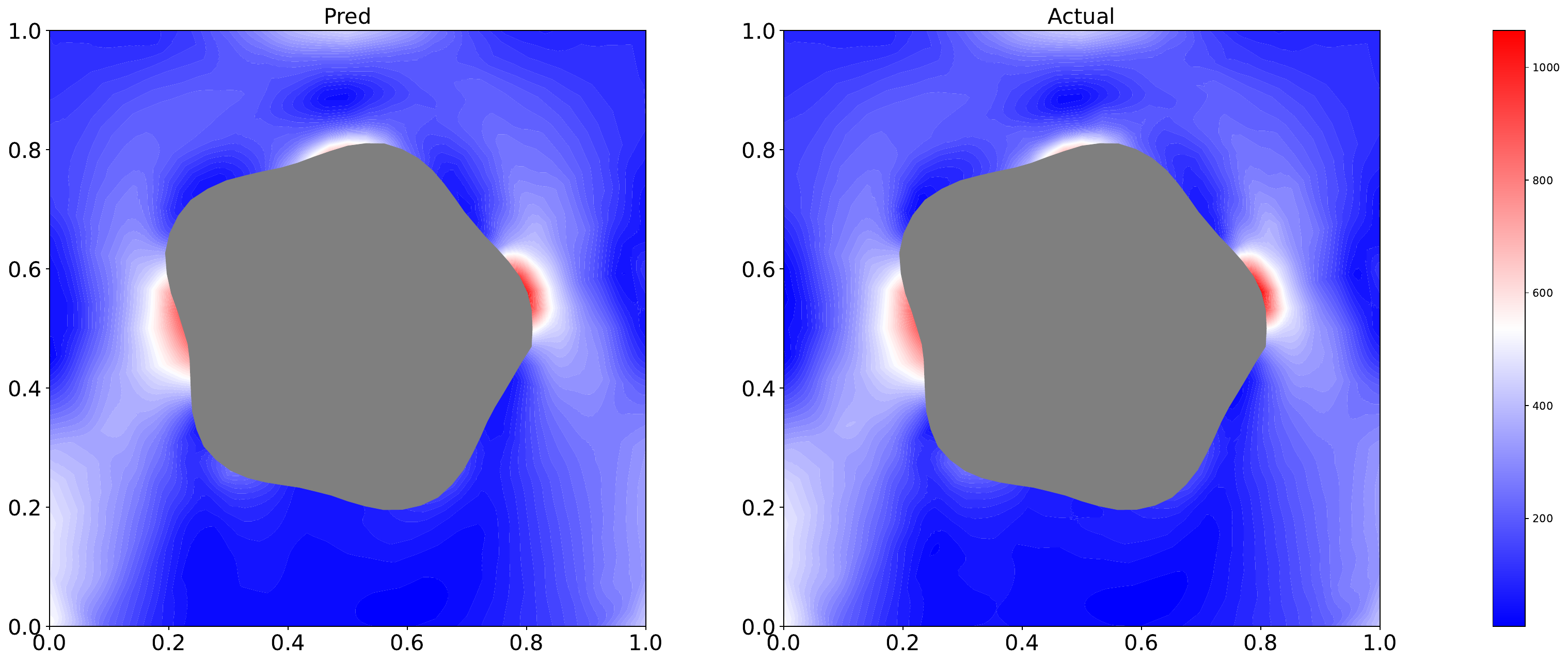}
  \end{subfigure}
  \hfill
  \begin{subfigure}[b]{0.33\linewidth}
      \includegraphics[width=\linewidth]{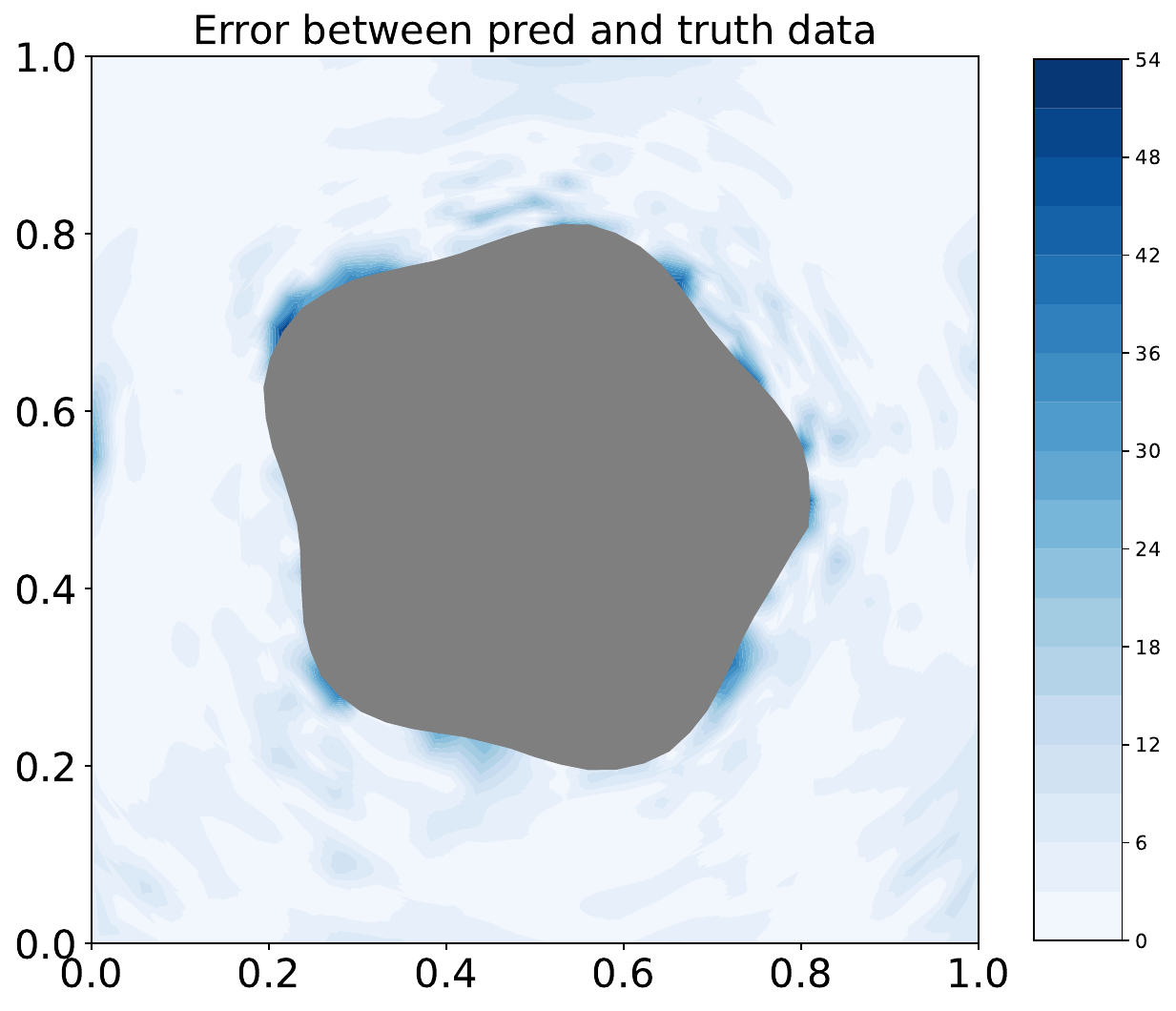}
  \end{subfigure}
  \caption{\textit{Elasticity problem.} Comparison of the stress field predicted by the proposed GP framework (DPK) with the high-fidelity solution for a test geometry in the reference domain.}
  \label{fig:elasticity_pred_0}
\end{figure}

\subsection{2D Transonic flow}\label{subsec_airfoil}

In this problem, we consider a 2D transonic flow around an airfoil with parametrized geometry. The objective is to learn the velocity magnitude field around the airfoil as a function of the airfoil geometry. For shape parameterization, a cubic spline with eight control nodes is used to modify the airfoil shape with respect to the reference airfoil NACA0012. The control nodes are vertically displaced based on samples drawn from a uniform distribution, $d \sim \mathcal{U}(-0.05, 0.05)$. We use the data provided in \cite{li2023fourier} for training and testing. The dataset consists of 1000 training and 200 testing snapshots provided on a C-grid mesh of size $200 \times 50$. To reduce the dimensionality of the parameter vector, we apply PCA to the $y$ coordinates of the deformed airfoil shape and use the coefficients of the first 6 PCA modes as the parameter vector $\boldsymbol{\mu}$.

As in the previous parametrized-domain problem, we map the snapshots onto a fixed reference domain and employ our GP framework presented in Section~\ref{section:general-grids} to learn the velocity magnitude field in the reference domain as a function of the airfoil geometry. First, snapshots in the reference domain are mapped onto a $101 \times 101$ rectilinear grid, which contains gaps corresponding to points within the airfoil section. We use the relative error metric in \eqref{relative_error} for evaluation. The relative error is computed using the predicted field on the $101 \times 101$ rectilinear grid in the reference domain, excluding the gappy region within the airfoil section.

We benchmark our method with other operator methods such as Geo-FNO \cite{li2023fourier} and FNO \cite{li2020fourier}. From the results presented in Table~\ref{table:airfoil_pipe}, it can be observed that the proposed approach with a deep product kernel (DPK) performs better than FNO and similar to Geo-FNO. Furthermore, we notice that the performance of our GP model with stationary Mat\'ern-$5/2$ kernel is better than FNO and UNet.

\begin{table}[h]
    \centering
    \begin{tabular}{c|cc|cc}
    \hline
    \multirow{2}{*}{\textbf{Method}} & \multicolumn{2}{c|}{\textbf{Airfoil}} & \multicolumn{2}{c}{\textbf{Pipe}} \\
    & training & testing & training & testing \\ \hline \hline
    Geo-FNO & 0.0134 & \textbf{0.0138} & 0.0047 & \textbf{0.0067} \\
    Proposed GP (DPK) & 0.0051 & 0.0142 & 0.0062 & 0.0131 \\
    Proposed GP (Mat\'ern-$5/2$) & 0.0065 & 0.0316 & 0.0067 & 0.0171 \\
    FNO interpolation & 0.0287 & 0.0421 & 0.0083 & 0.0151 \\
    UNet interpolation & 0.0039 & 0.0519 & 0.0109 & 0.0182 \\ \hline
    \end{tabular}
    \caption{\textit{Airfoil and pipe problems.} Relative errors comparing the proposed GP framework with ML-based operator methods, including Geo-FNO \cite{li2023fourier} and FNO \cite{li2020fourier}. Best testing results are highlighted in bold.}
    \label{table:airfoil_pipe}
\end{table}

\subsection{Navier-Stokes Equations (Pipe)}\label{subsec_pipe}

In this problem, we consider a 2D incompressible flow in a pipe with parametrized geometry. The objective is to learn the horizontal velocity field in the pipe as a function of the pipe geometry. The length and width of the pipe are 10 and 1, respectively. The pipe shape is described using four piecewise cubic polynomials that define the centerline. These polynomials are characterized at five control points by their slope and vertical position, which are varied according to uniform distributions, $d \sim \mathcal{U}(-2, 2)$ and $d \sim \mathcal{U}(-1, 1)$, respectively. We use the data provided in \cite{li2023fourier} for training and testing. The dataset consists of 1000 training and 200 test snapshots, provided on a $129 \times 129$ mesh. To construct the parameter vector $\boldsymbol{\mu}$, we concatenate the $x$ and $y$ coordinates of the deformed mesh nodes and apply PCA to reduce the dimensionality. The coefficients corresponding to the first 9 PCA modes are used to define $\boldsymbol{\mu}$.

As in the previous problems, the domain is parametrized. We first map the snapshots onto a fixed reference domain and employ our GP framework presented in Section~\ref{section:general-grids} to learn the horizontal velocity in the reference domain as a function of the pipe geometry. Here, snapshots in the reference domain are mapped onto a $129 \times 135$ rectilinear grid. During testing, the predicted velocity is mapped back to the deformed mesh with $129 \times 129$ grid points. For evaluation, we compare relative test errors against Geo-FNO \cite{li2023fourier} and FNO \cite{li2020fourier}. Table~\ref{table:airfoil_pipe} demonstrates that while our method with a deep product kernel (DPK) outperforms FNO and UNet, Geo-FNO attains lower test errors than the proposed GP framework in this case.

%% file: chapters/Conclusions.tex
\section{Concluding remarks}
\label{conclusions}

This paper has presented a scalable GP regression framework for learning solutions of parametrized partial differential equations. At the core of our framework is a deep product kernel that models complex, nonlinear correlations. Each component of the product kernel is a deep kernel, formed by composing a base kernel with a deep neural network to learn spatio-temporal correlations. This kernel structure is central to the method's computational efficiency and predictive performance.

For spatial data on a rectilinear grid, the proposed framework enables the use of Kronecker algebra, which reduces the computational complexity of training and inference from cubic to nearly linear in the number of spatial grid points. In this setting, the posterior variance can be computed exactly and efficiently. We extended the framework to general (unstructured) grids and complex domains by embedding the field onto a rectilinear background grid and treating undefined regions as a gappy data problem. To enable scalable uncertainty quantification under this gappy-grid formulation, we established rigorous theoretical upper and lower bounds for the posterior variance. These bounds are computable at a cost comparable to that of the posterior mean. Moreover, we also showed how our framework can be applied to model spatio-temporal fields over parametrized domains by introducing a mapping to a fixed reference domain.

We benchmarked the proposed framework on a diverse set of problems and observed accuracy that is competitive with, and in several cases superior to, operator learning methods such as FNO and DeepONet. Furthermore, it outperforms traditional physics-based reduced-order models, such as the manifold Galerkin method~\cite{Lee_Carlberg_2019} on the 1D unsteady Burgers' problem. The present framework provides uncertainty estimates alongside its predictions, which are essential for downstream tasks such as design optimization, parameter estimation, and control. 

In summary, the present work demonstrates that scalable Gaussian processes provide a robust and probabilistically principled approach for data-driven surrogate modeling of parametrized PDEs. 
Our work also suggests several promising avenues for future research. A primary direction is to relax the separability implied by the product kernel by developing partially non-separable variants (e.g., coupling selected dimensions or learning low-rank cross-factor corrections). Additional directions include reducing embedding-induced interpolation error on general grids and  improving the efficiency and robustness of the gappy-grid linear solves via preconditioning.

\section*{Acknowledgements}
This work is supported by a Natural Sciences and Engineering Research Council of Canada Discovery Grant.

\bibliographystyle{elsarticle-num} 
\bibliography{bibliography}

\pagebreak

\appendix

%% file: chapters/Appendix.tex
\section{Kronecker algebra} \label{Appendix:Kron_properties}

\textbf{Kronecker product:} If $\mathbf{A}$ is an $m \times n$ matrix and $\mathbf{B}$ is a $p \times q$ matrix, then the Kronecker product $\mathbf{A} \otimes \textbf{B}$ is an $mp \times nq$ matrix
$$
\mathbf{A} \otimes \mathbf{B} = \begin{bmatrix} 
    a_{11}\mathbf{B} & \ldots & a_{1n}\mathbf{B}\\
    \vdots & \ddots & \vdots \\
    a_{m1}\mathbf{B} & \ldots & a_{mn}\mathbf{B}\\
\end{bmatrix}
$$ 

\textbf{Properties of Kronecker Product}
\begin{subequations}
        \begin{align} 
            \textrm{Bilinearity:} \quad & \mathbf{A} \otimes (\textbf{B} + \textbf{C}) = \mathbf{A} \otimes \textbf{B} + \mathbf{A} \otimes \textbf{C} \\
            \textrm{Associativity:} \quad & (\mathbf{A} \otimes \textbf{B}) \otimes \textbf{C} = \mathbf{A} \otimes (\textbf{B} \otimes \textbf{C}) \\
            \textrm{Mixed-product:} \quad & (\mathbf{A} \otimes \textbf{B}) (\mathbf{C} \otimes \textbf{D}) = \mathbf{AC} \otimes \textbf{BD}\\
            \textrm{Inverse:} \quad & (\mathbf{A} \otimes \textbf{B})^{-1}  = \mathbf{A}^{-1} \otimes \textbf{B}^{-1}\\
            \textrm{Transpose:} \quad & (\mathbf{A} \otimes \textbf{B})^{T}  = \mathbf{A}^{T} \otimes \textbf{B}^{T}\\
            \textrm{Trace:} \quad & \textrm{tr}(\mathbf{A} \otimes \textbf{B})  = \textrm{tr}(\mathbf{A}) \textrm{tr}(\textbf{B})\\
            \textrm{Determinant:} \quad & \textrm{det}(\mathbf{A} \otimes \textbf{B})  = (\textrm{det}\mathbf{A})^{N_A}  (\textrm{det}\textbf{B})^{N_B}\\
            \textrm{Vec:} \quad & \textrm{vec}(\mathbf{CXB^T}) = (\mathbf{B} \otimes \mathbf{C}) \textrm{vec}(\textbf{X})\\
            \textrm{Hadamard-product:} \quad & (\mathbf{A} \otimes \mathbf{B}) \odot (\mathbf{C} \otimes \mathbf{D}) = (\mathbf{A} \odot \mathbf{C}) \otimes (\mathbf{B} \odot \mathbf{D}) 
        \end{align} 
    \end{subequations}

\section{Additional experiment details} \label{Appendix:Additional_details}

\subsection{Deep Kernel architecture}

In this section, we describe the architecture of the neural network used in the deep product kernel for the numerical experiments in section \ref{sec_results}. Each component kernel within the deep product kernel utilizes a fully connected neural network. This network consists of three hidden layers with 1000, 500, and 50 neurons, respectively, followed by an output layer with $d_o$ neurons. The output dimension $d_o$ is set to $D$ for the parameter kernel $k_{\mu}$ and $2$ for the spatial $k_{x_i}$ and temporal $k_t$ kernels. This lifting to a two-dimensional latent space provides additional flexibility for learning nonstationary correlation structures along each coordinate. 
A ReLU activation function is applied after each hidden layer. This architecture is the same as that presented in \cite{wilson2016deep}, where the authors employed a non-separable kernel structure with a single neural network.

\subsection{Training details}

For all numerical experiments, we utilized the Adam optimizer \cite{kingma2014adam}, with a weight decay of $2.5 \times 10^{-5}$, $\beta_1 = 0.5$, and $\beta_2 = 0.9$.  For all GP models, the initial noise variance was set to $5 \times 10^{-3}$. In problems that involved the GP methodology presented in Section \ref{section:general-grids} for general grids, the iterative solver used to estimate the pseudo-values $\mathbf{y}_g$ was limited to 2000 iterations and a tolerance of $10^{-5}$. The learning rate and the maximum number of training iterations for all problems are provided below.

\begin{table}[ht]
    \small
    \centering
    \begin{tabular}{|c|c|c|}
        \hline
        \textbf{Problem}
        &  \shortstack[t]{Learning rate} & \shortstack[t]{Maximum \\ training iterations} \\ \hline \hline
        1D unsteady Burgers’ & $10^{-2}$ & 1000 \\ \hline
        Hyper-elastic problem & $1.5 \times 10^{-2}$ & 450 \\ \hline
        2D Transonic flow & ${10^{-2}}^\dagger$ & 368 \\ \hline
        Navier-Stokes Equation (Pipe) & ${10^{-2}}^\dagger$ & 900 \\  \hline
    \end{tabular}
    \caption{Learning rate and maximum number of training iterations for all the problems. Learning rate with $(\cdot)^\dagger$ denotes the initial learning rate. For the  transonic flow and pipe flow test problems, we employed a step learning rate decay scheduler with a step size of 100 and a decay factor of 0.8.}
    \label{table:training_details}
\end{table}

\subsection{Elastic block problem}\label{Appendix:Elastic_block_problem}

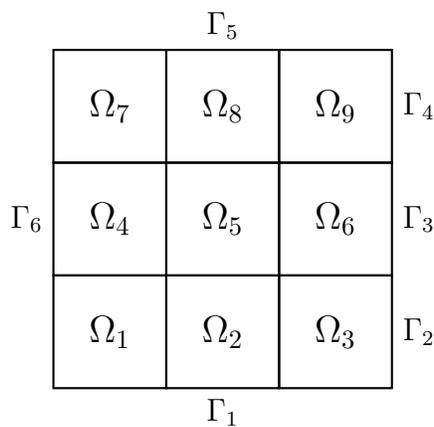
\begin{figure}[h]
    \centering
    \begin{tikzpicture}[scale=1.5]

        \foreach \x/\y/\num in {
            0/0/1, 1/0/2, 2/0/3,  
            0/1/4, 1/1/5, 2/1/6,  
            0/2/7, 1/2/8, 2/2/9   
        } {
            \draw[thick] (\x, \y) rectangle ++(1, 1);
            \node at (\x + 0.5, \y + 0.5) {\large $\Omega_{\num}$};
        }
    
        \node[below] at (1.5, 0) {$\Gamma_1$};
        
        \node[right] at (3, 0.5) {$\Gamma_2$};
        
        \node[right] at (3, 1.5) {$\Gamma_3$};
        
        \node[right] at (3, 2.5) {$\Gamma_4$};
        
        \node[above] at (1.5, 3) {$\Gamma_5$};
        
        \node[left] at (0, 1.5) {$\Gamma_6$};
    
    \end{tikzpicture}
    \caption{The subdomains and the boundary conditions for the linear elastic block problem~\cite{hesthaven2016certified} are shown.}
    \label{fig:elastic_block_png}
\end{figure}

This study focuses on a two-dimensional linear elasticity problem~\cite{hesthaven2016certified} in a square domain $\Omega$, segmented into nine square subdomains $\{\Omega_i\}_{i=1}^9$, as shown in Figure~\ref{fig:elastic_block_png}. The problem parameters include eight Young's moduli for the subdomains and three horizontal tractions on the right boundary. The Young's modulus ratio for each subdomain $\Omega_{p}$, $p \in \{1,2,\dots,8\}$, relative to the top-right subdomain $\Omega_9$, is denoted by $\mu_p \in [1, 100]$. Additionally, the horizontal tractions on the boundaries $\Gamma_{p-7}$, for $p \in \{9,10,11\}$, are defined by parameter $\mu_p \in [-1,1]$. The entire left boundary $\Gamma_6$ is clamped, while the top and bottom boundaries $\Gamma_1 \cup \Gamma_5$ remain traction-free. The parameter vector $\boldsymbol{\mu}$ is defined as $(\mu_1, \dots, \mu_{11})$ on the domain $\mathbb{P}=[1,100]^8\times[-1,1]^3$.

Training data is generated by sampling 6000 parameter vectors from the domain $\mathbb{P}$ using a Sobol sequence \cite{Sobol_1967}. For each parameter vector $\boldsymbol{\mu}$, RBniCS~\cite{RozzaBallarinScandurraPichi2024} is used to compute the displacement vector field $u(\boldsymbol{\mu}) = (u_1(\boldsymbol{\mu}), u_2(\boldsymbol{\mu}))$ on an unstructured grid using a finite element method. Of these, 5000 snapshots $\mathbf{u}_h(\boldsymbol{\mu})$ are used for training and the remaining 1000 for testing. The GP framework presented in Section~\ref{section:Struct-GPR} is employed to learn the mapping between parameter $\boldsymbol{\mu}$ and displacement fields $u_1$ and $u_2$. Training snapshots are first interpolated onto a $M=40\times 40$ Cartesian product grid. During testing, predicted snapshots are interpolated back onto the unstructured grid for error computation. To evaluate  performance, we use the relative error metric between the high-fidelity solution $\mathbf{u}_h(\boldsymbol{\mu}^{(i)})$ and the predicted snapshots $\widehat{\mathbf{u}}(\boldsymbol{\mu}^{(i)})$ for a given test parameter $\boldsymbol{\mu}^{(i)}$. Models are trained with an increasing number of snapshots ($N = 50, 100, 200, 500,$ $1000, 2000, 5000$) to evaluate their learning capacity with more training data.

Figure~\ref{fig:elastic_block_convergence} compares test errors obtained using different methods: our GP method, POD-GPR, and POD-Galerkin. Our method with the deep product kernel (DPK) shows significantly lower test errors than the data-driven POD-GPR. Furthermore, as the number of training snapshots $N$ increases, the test error of our GP method consistently decreases, eventually surpassing the accuracy of the physics-based POD-Galerkin, as shown in Figure~\ref{fig:elastic_block_convergence_sub2}. Figure~\ref{fig:elastic_block_box_plot_idx_0_1} shows the box plot of relative test errors on 1000 test parameters for models trained on an increasing number of snapshots ($N = 50, 100, 200, 500,$ $1000, 2000, 5000$). The box plots in Figure~\ref{fig:elastic_block_box_plot_idx_0_1} show that the test errors of the proposed GP method with DPK are significantly lower than the test errors of POD-GPR and POD-Galerkin when models are trained on a large number of training snapshots.

\begin{figure}[H]
  \centering
  \begin{subfigure}[b]{0.49\linewidth}
      \includegraphics[width=\linewidth]{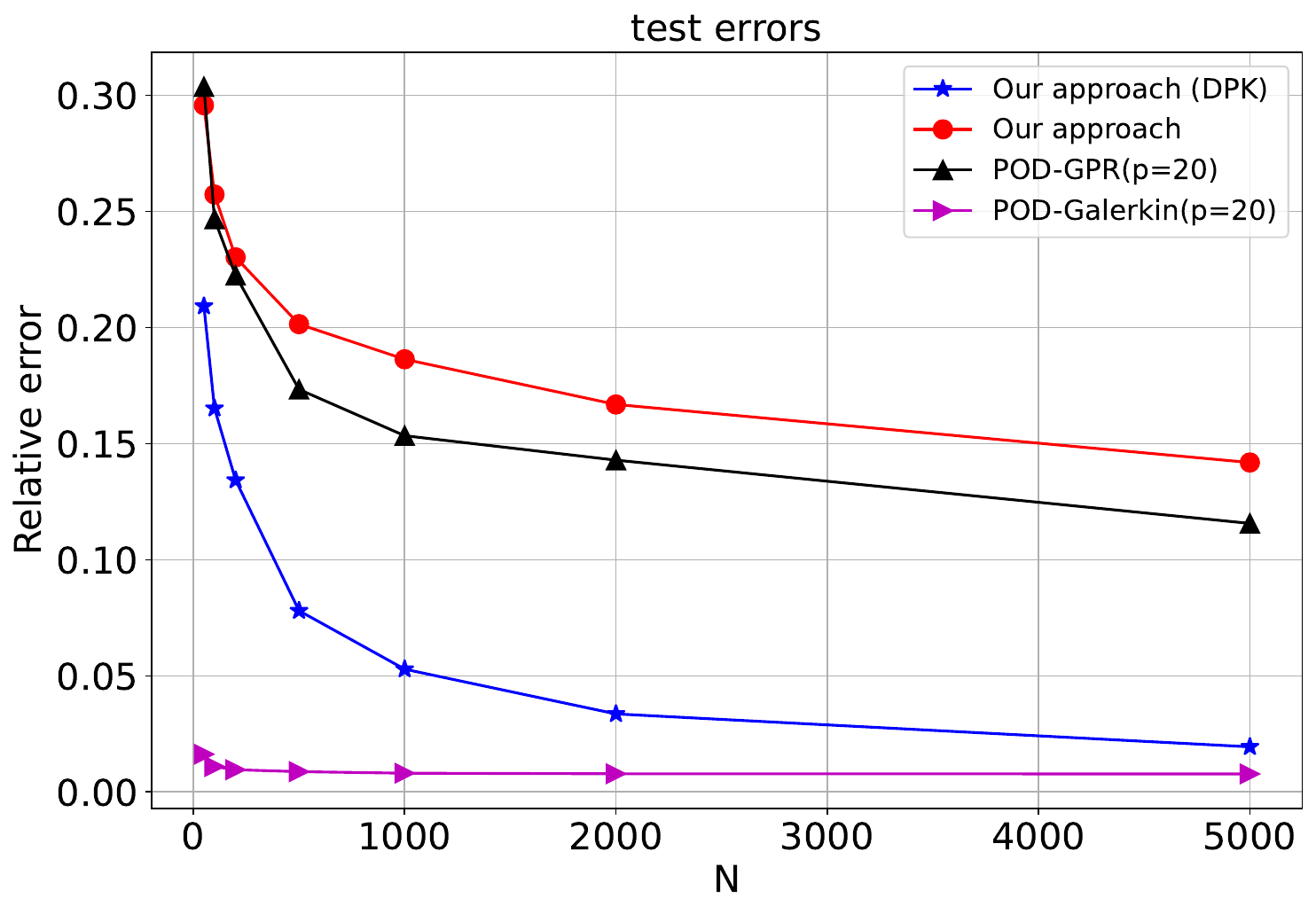}
      \caption{$u_1(\boldsymbol{\mu})$}
      \label{fig:elastic_block_convergence_sub1}
  \end{subfigure}
  \hfill
  \begin{subfigure}[b]{0.49\linewidth}
      \includegraphics[width=\linewidth]{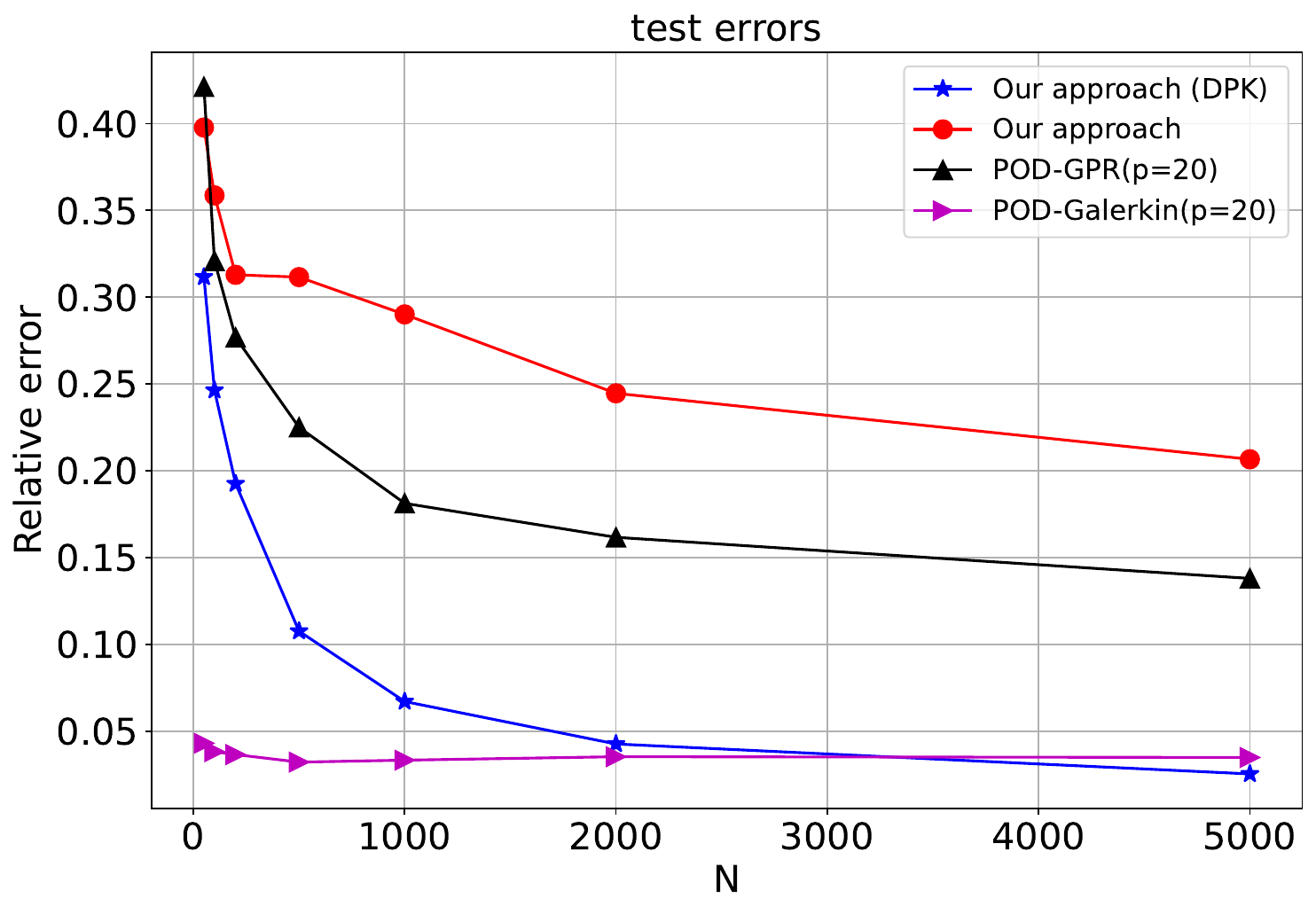}
      \caption{$u_2(\boldsymbol{\mu})$}
      \label{fig:elastic_block_convergence_sub2}
  \end{subfigure}
  \caption{\textit{Elastic block problem.} Convergence plot comparing the mean relative test errors corresponding to 1000 test snapshots for the proposed GP framework (using the deep product kernel (DPK)) with POD-GPR and physics-based POD-Galerkin with reduced dimension $p=20$.}
  \label{fig:elastic_block_convergence}
\end{figure}

\begin{figure}[H]
    \centering
    \begin{subfigure}[b]{0.49\linewidth}
        \includegraphics[width=\linewidth]{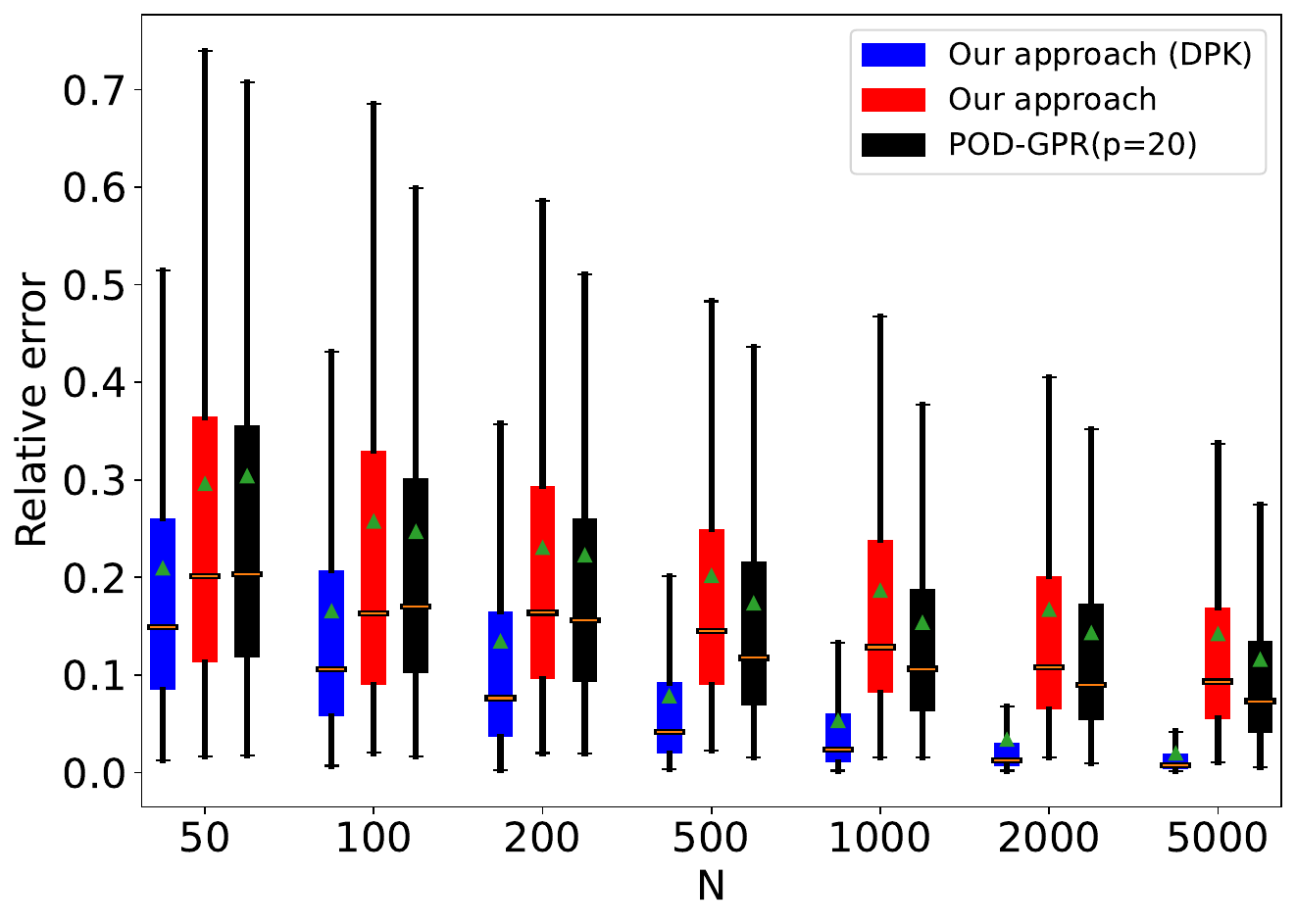}
        \caption{$u_1(\boldsymbol{\mu})$}
        \label{fig:elastic_block_box_plot_idx_0_sub1}
    \end{subfigure}
    \hfill
    \begin{subfigure}[b]{0.48\linewidth}
        \includegraphics[width=\linewidth]{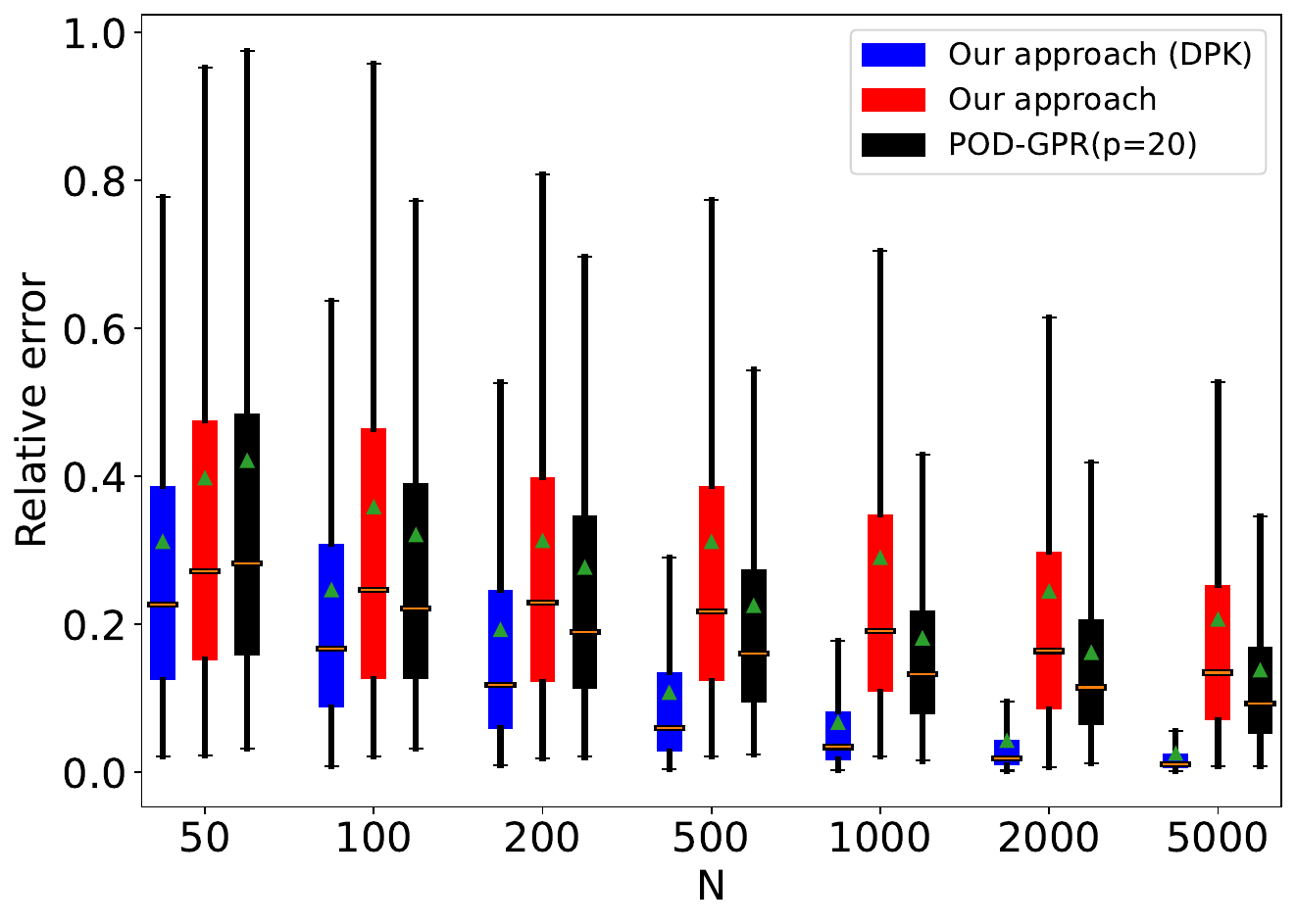}
        \caption{$u_2(\boldsymbol{\mu})$}
        \label{fig:elastic_block_box_plot_idx_1_sub2}
    \end{subfigure}
    \caption{\textit{Elastic block problem.} Box plot of relative test errors on 1000 test parameters for models trained on an increasing number of snapshots ($N = 50, 100, 200, 500,$ $1000, 2000, 5000$).}
    \label{fig:elastic_block_box_plot_idx_0_1}
  \end{figure}